\documentclass{article}

\PassOptionsToPackage{dvipsnames}{xcolor}

\usepackage{microtype}
\usepackage{graphicx}
\usepackage{subcaption}
\usepackage{booktabs}
\usepackage{hyperref}
\usepackage[preprint]{icml2026}

\usepackage{amsmath,amssymb,mathtools,amsthm}
\usepackage[capitalize,noabbrev]{cleveref}

\usepackage{algorithm,algorithmic}
\usepackage{float,wrapfig,multirow,colortbl}

\theoremstyle{plain}
\newtheorem{theorem}{Theorem}[section]

\newtheorem{lemma}[theorem]{Lemma}

\theoremstyle{definition}
\newtheorem{definition}[theorem]{Definition}
\newtheorem{assumption}[theorem]{Assumption}
\theoremstyle{remark}

\usepackage{bm}

\def\eqref#1{equation~\ref{#1}}

\def\1{\bm{1}}

\DeclareMathAlphabet{\mathsfit}{\encodingdefault}{\sfdefault}{m}{sl}
\SetMathAlphabet{\mathsfit}{bold}{\encodingdefault}{\sfdefault}{bx}{n}

\icmltitlerunning{Zero-Forgetting CISS via Dual-Phase Cognitive Cascades}

\begin{document}

\twocolumn[
  \icmltitle{Zero-Forgetting Class-Incremental Segmentation via \\
  Dual-Phase Cognitive Cascades}

  \begin{icmlauthorlist}
    \icmlauthor{Yuquan Lu\textsuperscript{*}}{sysu}
    \icmlauthor{Yifu Guo\textsuperscript{*}}{sysu}
    \icmlauthor{Zishan Xu}{sjtu}
    \icmlauthor{Siyu Zhang}{swu}
    \icmlauthor{Yu Huo}{cuhk}
    \icmlauthor{Siyue Chen}{scut}
    \icmlauthor{Siyan Wu}{scnu}
    \icmlauthor{Chenghua Zhu}{scnu}
    \icmlauthor{Ruixuan Wang}{sysu}
  \end{icmlauthorlist}

  \icmlaffiliation{sysu}{Sun Yat-sen University}
  \icmlaffiliation{sjtu}{Shanghai Jiao Tong University}
  \icmlaffiliation{swu}{Southwest University}
  \icmlaffiliation{cuhk}{The Chinese University of Hong Kong}
  \icmlaffiliation{scut}{South China University of Technology}
  \icmlaffiliation{scnu}{South China Normal University}

  \icmlcorrespondingauthor{Ruixuan Wang}{wangruix5@mail.sysu.edu.cn}

  \vskip 0.3in
]

\printAffiliationsAndNotice{\icmlEqualContribution}

\begin{abstract}
  Continual semantic segmentation (CSS) is a cornerstone task in
  computer vision that enables a large number of downstream
  applications, but faces the catastrophic forgetting challenge.
  In conventional class-incremental semantic segmentation (CISS)
  frameworks using Softmax-based classification heads, catastrophic
  forgetting originates from Catastrophic forgetting and task
  affiliation probability.
  We formulate these problems and provide a theoretical analysis to
  more deeply understand the limitations in existing CISS methods,
  particularly Strict Parameter Isolation (SPI).
  To address these challenges, we follow a dual-phase intuition from
  human annotators, and introduce \textbf{Cog}nitive \textbf{Ca}scade
  \textbf{S}egmentation (CogCaS), a novel dual-phase cascade
  formulation for CSS tasks in the CISS setting.
  By decoupling the task into class-existence detection and
  class-specific segmentation, CogCaS enables more effective
  continual learning, preserving previously learned knowledge while
  incorporating new classes.
  Using two benchmark datasets PASCAL VOC 2012 and ADE20K, we have
  shown significant improvements in a variety of challenging
  scenarios, particularly those with long sequence of incremental
  tasks, when compared to exsiting state-of-the-art methods.
  Our code will be made publicly available upon paper acceptance.
\end{abstract}

\section{Introduction}

Deep learning has driven rapid progress across a wide range of
tasks, from natural language
understanding~\cite{du2025graphmaster,du2025graphoracle} to visual
perception, among which semantic segmentation has become a
cornerstone of computer vision with influential applications in
autonomous navigation and medical diagnostics. Despite these
advances, real-world deployment faces a fundamental limitation, i.e.,
the conventional paradigm requires all object categories to be
predefined before training. When new classes emerge, as they
inevitably do in dynamic environments, models must be retrained with
training data of both old and new classes, incurring increased
computational costs and potential privacy concerns. 

This limitation has driven substantial research in continual learning
(CL), where models incrementally acquire new knowledge while trying
to preserve existing capabilities. The core challenge is catastrophic
forgetting: the tendency of neural networks to abruptly forget
previously learned information when updated. Current approaches span
multiple paradigms, including regularization-based
methods~\cite{wang2022survey,xu2021revisiting} which constrain
parameter updates, replay-based strategies~\cite{aljundi2021end,
rusu2022end} which maintain historical exemplars, and optimization
techniques~\cite{li2022robust} which seek non-interfering parameter
spaces. Among these, SPI~\cite{serra2018overcoming} stands out for
providing theoretical guarantee of zero-forgetting  through parameter
compartmentalization, effectively addressing the stability-plasticity
dilemma in classification tasks.

However, extending continual learning to semantic segmentation
introduces unique complexities beyond classification. In particular
in class-incremental semantic segmentation (CISS), models must adapt
to new categories while maintaining pixel-precise understanding of
previous classes, all without access to complete historical data.
This setting introduces the challenge of background shift: pixels
belonging to future classes are temporarily labeled as background,
requiring the model to continuously revise its understanding of what
constitutes ``background'' as new classes
emerge~\cite{cermelliModelingBackgroundIncremental2020}. {This continuous re-evaluation of
  the ``background'' class due to background shift critically
  exacerbates the stability-plasticity dilemma, often compelling the
  model to make a detrimental trade-off between the forgetting of
  previously learned classes and the insufficient acquisition of new
ones, a challenge visually depicted in Figure~\ref{fig:error_linear}.}

While recently developed CISS approaches show promising performance
in knowledge preservation, we prove that this localized optimization
strategy, even assuming perfect preservation of knowledge of past
classes, structurally prevents convergence to the global optimum
achievable through joint training on all classes. Our theoretical
analysis also reveals that they face fundamental limitations.
Architecturally, the commonly used task headers based on Softmax need
to output complete probabilities for all current class at every
stage. However, the probability output for task segmentation head is
essentially local and is only optimized for the current task
category. 
This creates a stark trade-off: while freezing historical heads
eliminates dynamic interference, it simultaneously cements
distributional biases that prevent global optimization.



Motivated by these theoretical analyses, we propose a
\textbf{Cog}nitive \textbf{Ca}scade \textbf{S}egmentation (CogCaS)
architecture that fundamentally restructures the CISS paradigm. Our
approach 
introduces two key innovations, i.e., \textbf{Existence-Driven
Activation} by which segmentation heads are activated only for
detected classes and thus  background interference is eliminated, and
\textbf{Parameter Modularity} by which  independent detector and
segmenter per class enable isolated evolution without cross-task contamination.
Our implementation deliberately adopts elementary components: basic
backbones, single-layer detectors and simple segmenters, and standard
loss functions. This architectural transparency ensures that our
empirical improvements stem purely from the cognitive cascade design
rather than implementation sophistication, while maintaining the
framework's extensibility for future enhancements.

In summary, our main contributions include (1) the first systematic
characterization of the advantages and disadvantages of SPI strategy
in CISS, (2) a cognitively inspired architecture that resolves the
stability-plasticity dilemma through decoupling design, and (3)
state-of-the-art performance with significantly increased margins
particularly in challenging long-sequence continual learning scenarios.

\begin{figure}[!tbh]
  \centering
  \begin{minipage}{0.48\textwidth}
    \centering
    \includegraphics[width=\linewidth]{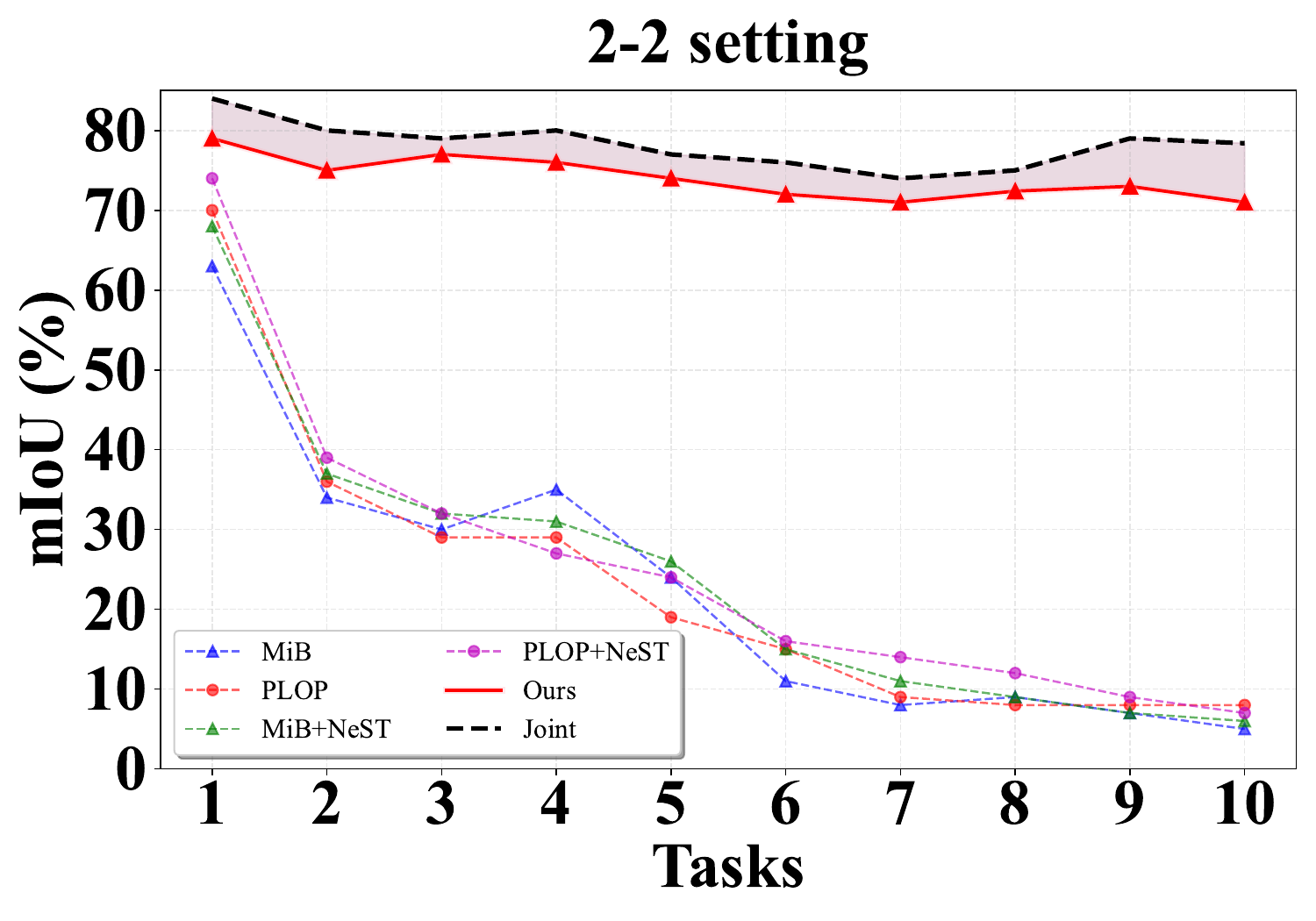} 
  \end{minipage}\hfill 
  \begin{minipage}{0.48\textwidth} 
    \centering 
    \includegraphics[width=\linewidth]{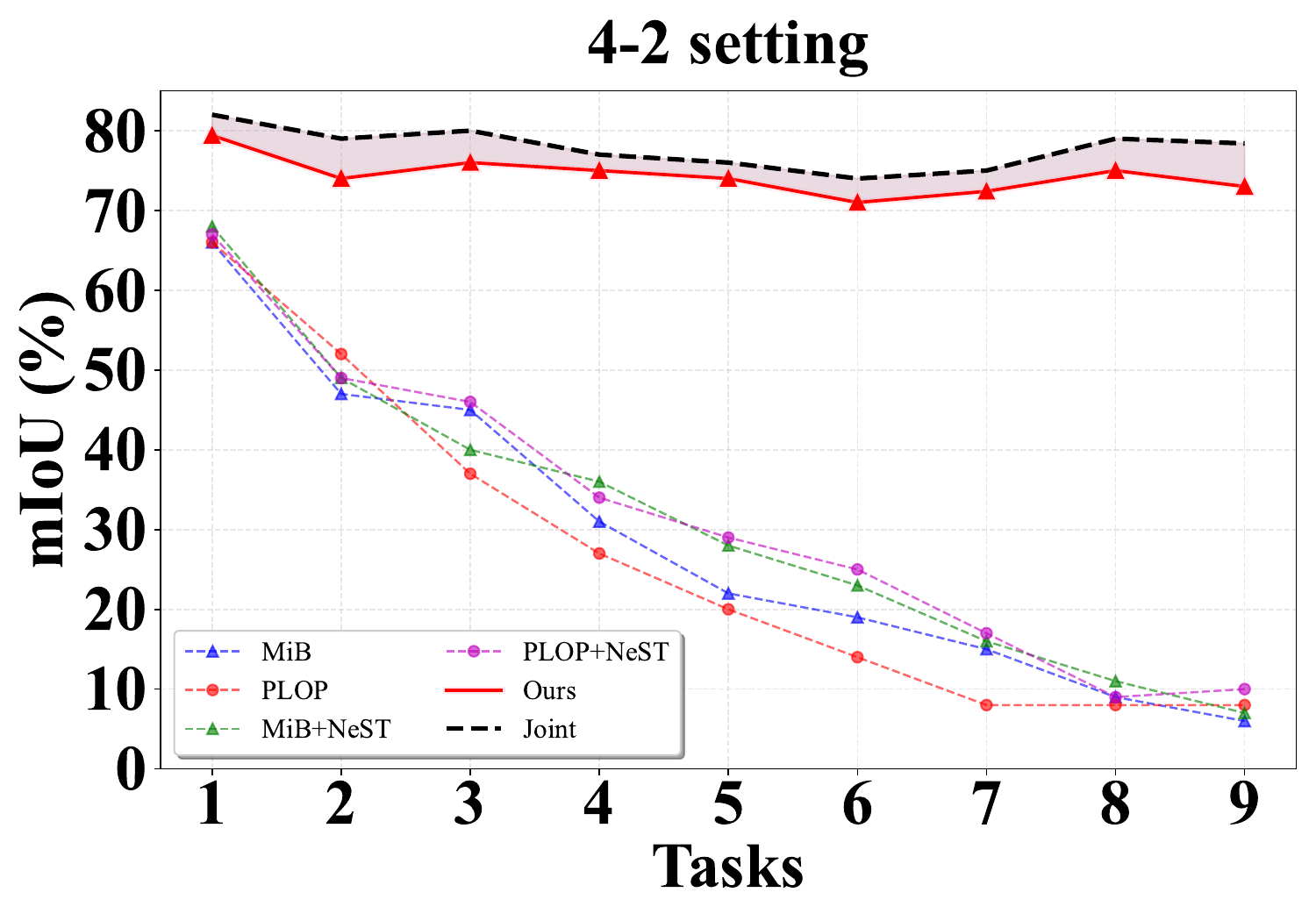}
  \end{minipage}

  \caption{ mIoU of classes learned at each task in PASCAL VOC
    continual semantic segmentation under 2-2 (left) and 4-2 (right)
    settings. While existing methods exhibit declining learning
    capability as tasks progress, our CogCaS consistently achieves
    strong performance around 70\% mloU,demonstrating stable and robust
  learning ability across all incremental stages.}
  \label{fig:error_linear}
\end{figure}

\section{Related Work}
Class-Incremental Semantic Segmentation (CISS) as a relatively new
research topic is often considered as an extension of image-level
continual learning, and therefore those techniques developed
originally for image-level continual learning have been adopted for
CISS, including \emph{regularization}~\cite{cermelliModelingBackgroundIncremental2020},
\emph{pseudo-labeling/self-supervision}~\cite{ECCV2024EarlyPP,yang2023distillation},
and \emph{experience
replay}~\cite{chaSSULSemanticSegmentation2021,samrep2021RECALL,Yang_Li_Lin_Wang_Lin_Liu_Wang_2023,zhangMiningUnseenClasses2022}.
Such adoption is still being actively explored, as shown by very
recent attempts~\cite{yu2025ipseg}. All these methods model
segmentation as \emph{per-pixel multi-class Softmax classification},
where old and new logits directly compete, causing background drift
that accumulate with the number of
tasks~\cite{farajtabar2020orthogonal} which can be seen in
Figure~\ref{fig:error_linear}.

To mitigate Softmax competition, several studies recast CISS as a
\emph{pixel-level multi-label} problem combined with strict parameter
freezing. SSUL~\cite{chaSSULSemanticSegmentation2021} appends binary channels for each
class and freezes them thereafter, but relies heavily on the initial
task's class capacity to initialize its ``unknown'' class prediction;
in dense incremental settings (e.g., 1-1), this causes performance
collapse as incomparable logits from disjoint phases are forced into
global competition. IPSeg~\cite{yu2025ipseg} extends this paradigm
with image posterior calibration but still requires exemplar replay
for stability. In addition, built on Mask2Former~\cite{chengPerPixelClassificationNot2021},
methods such as
CoMFormer~\cite{cermelliCoMFormerContinualLearning2023} and
CoMasTRe~\cite{CoMasTRe2024} treat CISS as a
\emph{mask-classification} problem. CoMasTRe decouples ``where'' and
``what'' via query-based segmenters and objectness transfer, yet
relies on complex multi-stage distillation to mitigate forgetting.
ECLIPSE~\cite{kim2024eclipse} employs visual prompt tuning within a
shared decoder and logit manipulation to address drift in continual
panoptic segmentation. \textbf{Our \emph{CogCaS} adopts
  \emph{existence-driven activation}, which decouples ``does class
  \(c\) exist?'' from ``segment class \(c\)'', and freezes each
  class-specific segmentation head after learning, achieving
  task-agnostic zero forgetting while maintaining high plasticity for
new classes.} Crucially, CogCaS is derived from Hessian-based
theoretical analysis (section 3.2): the explicit image-level router
activates independent binary heads, structurally eliminating
cross-task competition without distillation or replay.

\section{Methodology}

\subsection{Preliminaries}
\label{sec:prelim}

\noindent\textbf{Notations.} 
We consider the problem of Class-Incremental Semantic Segmentation
(CISS), where a model sequentially learns a sequence of $T$ tasks,
denoted as $\{\mathcal{T}_1, \dots, \mathcal{T}_T\}$. Each task
$\mathcal{T}_t$ is associated with a unique training set
$\mathcal{D}_t$ which contains certain number of training images and
corresponding pixel-wise annotations.  The task introduces a set of
task-specific foreground classes $\mathcal{C}_t$, which are mutually
exclusive between tasks, i.e., $\mathcal{C}_{\tau} \cap \mathcal{C}_t
= \emptyset$ for all $\tau \neq t$. For each training image in task
$\mathcal{T}_t$, each pixel is annotated as one of the task-specific
foreground classes in $\mathcal{C}_t$ or the special ``background''
class. Notably for the training set $\mathcal{D}_t$, all pixels not
belonging to the foreground class set $\mathcal{C}_t$ are annotated
as background. Thus, the annotated background regions in each
training image of task $\mathcal{T}_t$ encompass the true background
as well as image regions corresponding to foreground classes from
past ($\mathcal{T}_{1:t-1}$) or future ($\mathcal{T}_{t+1:T}$) tasks.
When updating the model with the training set $\mathcal{D}_t$ and
validation set $\mathcal{D}^{v}_t$ of task $\mathcal{T}_t$, the model
needs to be expanded to account for the set $\mathcal{C}_t$ of new
foreground classes. Let $\bm{\theta}$ denote the collection of all
the learnable model parameters throughout the continual learning
process (from task $\mathcal{T}_1$ to $\mathcal{T}_T$). When the model learns
task $\mathcal{T}_t$, the part of $\bm{\theta}$ which are uniquely
associated with future tasks ($\mathcal{T}_{t+1:T}$) are frozen with
certain default value (i.e., zero here). After updating the model
based on certain task-specific loss function $\mathcal{L}_t$, the
leranable model parameters are changed from $\bm{\theta}^*_{t-1}$ to
$\bm{\theta}^*_t$, where $\bm{\theta}^*_t$ represents the locally
optimal model parameters that minimizes $\mathcal{L}_t$. The change
in parameter values from $\mathcal{T}_{t-1}$ to $\mathcal{T}_t$ is
denoted by $\Delta_t:=\bm{\theta}_t^*-\bm{\theta}_{t-1}^*$, and the
Hessian matrix of the loss function $\mathcal{L}_t$ with respect to
the learnable parameters $\bm{\theta}$ is denoted by
$\mathbf{H}_t(\bm{\theta})=\frac{\partial^2\mathcal{L}_t(\bm{\theta})}{\partial\bm{\theta}^2}$.

\noindent\textbf{Convergence Assumption.}To enable formal analysis of
the CISS framework, we make the following assumption about the
learning dynamics.

\begin{assumption}[Convergence for each task]
  \label{assumption:convergence}
  For each task $\mathcal{T}_t$, the optimization process converges
  to a locally optimal model parameters $\bm{\theta}_t^*$, such that
  within its neighborhood $\mathcal{N}(\bm{\theta}_t^*)$, the
  magnitude of the loss gradient $\nabla\mathcal{L}_t(\bm{\theta})$
  is bounded (smaller than $\epsilon$) and the Hessian matrix
  $\mathbf{H}_t(\bm{\theta})$ remains positive semi-definite,
  satisfying the second-order conditions for local optimality, i.e.,
  \begin{equation}
    |\nabla\mathcal{L}_t(\bm{\theta})| \leq \epsilon \,,
    \mathbf{H}_t(\bm{\theta}) \succeq 0 \,, \quad \forall \bm{\theta}
    \in \mathcal{N}(\bm{\theta}_t^*) \,.
  \end{equation}
\end{assumption}


Furthermore, it assumes that the magnitude of parameter updates
satisfies $|\Delta_t| < \delta$, $\forall t\leq T$, for some small
constant $\delta > 0$.
{When learning a new task (per
  Assumption~\ref{assumption:convergence}), the model's performance on
  prior tasks can degrade, a phenomenon known as catastrophic
  forgetting. To quantify this, we will define the forgetting rate (see
  below) based on the change in the loss function, which serves as a
  continuous and differentiable {measurement function} for task
  performance and reflects generalization performance when evaluated on
a validation set.}

\begin{definition}[Average Forgetting Rate]\label{definition:forgetting_rate}
  {The forgetting rate for a previously learned task
    $\mathcal{T}_\tau$ (where $\tau<t$) after the model parameters have
    been updated to $\theta_t$ for task $\mathcal{T}_t$, is defined as
    the change in the loss evaluated on the validation set
    $\mathcal{D}^v_\tau$. Formally, it is given by
    $\mathcal{E}_{\tau}(\theta_t)=\mathcal{L}^{val}_{\tau}(\theta_t)-\mathcal{L}^{val}_{\tau}(\theta_{\tau}^*)$,
    where $\mathcal{L}_\tau^{val}$ denotes the loss computed on the
    validation data for task $\mathcal{T}_\tau$. By definition, this
    rate is zero when evaluated at the task's own optimal parameters,
    i.e., $\mathcal{E}_\tau(\theta_\tau^*)=0$. In the context of
    continual learning, the locally optimal parameters $\theta_\tau^*$
    for task $\mathcal{T}_\tau$ may cause forgetting of knowledge from
    previous tasks $\mathcal{T}_{1:\tau-1}$. This effect is measured by
  the Average Forgetting Rate, defined as}
  \begin{equation}
    \bar{\mathcal{E}}_t(\theta_t^*)=\frac{1}{t-1}\sum_{\tau=1}^{t-1}\mathcal{E}_{\tau}(\theta_t^*)\quad
    t\geq 2\,.
  \end{equation}
\end{definition}

{
\subsection{Strict Parameter Isolation in CISS}}
\label{sec:tspi-ciss}

Based on Assumption~\ref{assumption:convergence} and
Definition~\ref{definition:forgetting_rate}, the relationship between
the average forgetting rate for task
$\mathcal{T}_t$ and $\mathcal{T}_{t-1}$ can be obtained as (see
Appendix~\ref{appendix:average_fogetting})
\begin{equation*}\label{eq:forget_rate_rel}
  \begin{split}
    \bar{\mathcal{E}}_t(\theta_t^*)=\frac{1}{t-1}\bigg(&(t-2)\cdot
      \bar{\mathcal{E}}_{{t-1}}(\theta_{t-1}^*)\\
      &+\frac{1}{2}\Delta_t^\intercal\Big(\sum_{\tau=1}^{t-1}\mathrm{H}_{\tau}(\theta_\tau^*)\Big)\Delta_t\\
    &+v^\intercal\Delta_t\bigg)+\mathcal{O}(\delta\cdot\epsilon)
    \,,
  \end{split}
\end{equation*}
where
$v^T=\sum_{\tau=1}^{t-1}(\theta_{t-1}^*-\theta_{\tau}^*)^\intercal\mathbf{H}_{\tau}(\theta_{\tau}^*)$.
From Equation~(\ref{eq:forget_rate_rel}), it is clear that the
average forgetting rate $\bar{\mathcal{E}}_t(\theta_t^*)$ of locally
optimal parameters $\theta_t^*$ for task $\mathcal{T}_t$ is directly
related to the average forgetting rate
$\bar{\mathcal{E}}_{t-1}(\theta_{t-1}^*)$ of locally optimal
parameters $\theta_{t-1}^*$ for task $\mathcal{T}_{t-1}$. With such
relationship, we can obtain the following zero-forgetting condition
(see Appendix~\ref{appendix:average_fogetting}).

\begin{theorem}[Zero-forgetting Condition]
  \label{theorem:null-forgetting}
  For any continuous learning algorithm that satisfies
  Assumption~\ref{assumption:convergence},
  (1) if  $\bar{\mathcal{E}}_{\tau}(\theta_{\tau}^*)=0$,
  $\forall\tau<t$, then
  $\bar{\mathcal{E}}_t(\theta_t^*)=\frac{1}{2(t-1)}\Delta_{t}^{\intercal}\left(\sum_{i=1}^{t-1}\mathbf{H}_{i}(\theta_i^*)\right)\Delta_{t}$,
  and (2)
  $\mathcal{E}_\tau(\theta_t^*)=0,\forall\tau<t$, if and only if
  $\Delta_t^\intercal(\sum_{\tau=1}^{t-1}\mathbf{H}_{\tau}(\theta_{\tau}^*))\Delta_t
  = 0$.
\end{theorem}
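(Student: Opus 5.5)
We derive both parts from the recursion (\ref{eq:forget_rate_rel}) for $\bar{\mathcal{E}}_t(\theta_t^*)$ stated just before the theorem, treating the $\mathcal{O}(\delta\cdot\epsilon)$ remainder as negligible, as the paper does throughout.

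Before proving the theorem, we recall where the recursion comes from. For each $\tau<t$, Taylor-expand $\mathcal{L}_\tau$ to second order around $\theta_\tau^*$ and evaluate it at $\theta_t^*=\theta_{t-1}^*+\Delta_t$. The first-order terms are bounded by $\epsilon\cdot\delta$ via Assumption~\ref{assumption:convergence}. We take the validation loss to be approximated by the training loss with the same Hessian, as the paper implicitly does. Subtracting the analogous expansion at $\theta_{t-1}^*$ gives
\[
\mathcal{E}_\tau(\theta_t^*)=\mathcal{E}_\tau(\theta_{t-1}^*)+(\theta_{t-1}^*-\theta_\tau^*)^\intercal\mathbf{H}_\tau(\theta_\tau^*)\Delta_t+\tfrac12\Delta_t^\intercal\mathbf{H}_\tau(\theta_\tau^*)\Delta_t+\mathcal{O}(\delta\epsilon).
\]
Summing over $\tau\le t-1$ and using $\mathcal{E}_{t-1}(\theta_{t-1}^*)=0$ yields the recursion.

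\textbf{Part (1).} Suppose $\bar{\mathcal{E}}_\tau(\theta_\tau^*)=0$ for all $\tau<t$, so the term $(t-2)\bar{\mathcal{E}}_{t-1}(\theta_{t-1}^*)$ vanishes. It remains to kill the linear term $v^\intercal\Delta_t$. Since $\mathbf{H}_\tau$ is PSD at $\theta_\tau^*$, each forgetting term is a nonnegative quadratic up to $\mathcal{O}(\delta\epsilon)$. Hence a zero average forces every individual $\mathcal{E}_i(\theta_\tau^*)=0$. This is the main obstacle: it has to be done inductively, showing that $\mathcal{E}_i(\theta_\tau^*)=0$ for all $i<\tau<t$ implies $(\theta_\tau^*-\theta_i^*)^\intercal\mathbf{H}_i(\theta_i^*)(\theta_\tau^*-\theta_i^*)=0$. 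For a PSD matrix, a vanishing quadratic form means $\mathbf{H}_i(\theta_i^*)(\theta_{t-1}^*-\theta_i^*)=0$, since $x^\intercal Hx=0\Rightarrow Hx=0$ when $H\succeq0$. Therefore $v=0$, and the recursion reduces to $\frac{1}{2(t-1)}\Delta_t^\intercal(\sum_i\mathbf{H}_i)\Delta_t$, which is the claimed formula.

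We expect the delicate point to be justifying the step from a zero average to zero individual terms. This relies on each $\mathcal{E}_\tau$ being nonnegative near $\theta_\tau^*$, which follows from local optimality in Assumption~\ref{assumption:convergence} modulo $\mathcal{O}(\delta\epsilon)$. We would state explicitly that equalities hold up to this order.

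\textbf{Part (2).} Argue by the same nonnegativity, under the same inductive hypothesis of zero past forgetting.
\begin{itemize}
\item[($\Leftarrow$)] Suppose the quadratic form vanishes. Since it is a sum of PSD forms, each $\Delta_t^\intercal\mathbf{H}_\tau\Delta_t=0$. Each $\mathcal{E}_\tau(\theta_t^*)$ is then given by the per-task version of the expansion above, and its linear term vanishes as shown in Part (1). So each $\mathcal{E}_\tau(\theta_t^*)=0$.
\item[($\Rightarrow$)] Suppose all $\mathcal{E}_\tau(\theta_t^*)=0$. Then $\bar{\mathcal{E}}_t=0$, and by Part (1) the quadratic form equals zero.
\end{itemize}
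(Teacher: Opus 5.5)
\textbf{Part (1).} Your proof is correct at the paper's level of rigor: an exact quadratic model, validation and training losses identified, remainders dropped. It reaches Part (1) by a different route from the paper.

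The paper never passes from the average to individual tasks. It proves $v_t=0$ by telescoping from $v_2=0$ via the increment identity $v_{k+1}-v_k=\Delta_k^\intercal\sum_{o<k}\mathbf{H}_o(\theta_o^*)$. Each increment vanishes because $\Delta_k^\intercal(\sum_{o<k}\mathbf{H}_o)\Delta_k=0$ at every earlier step $k$ and the sum of Hessians is PSD.

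You instead kill each summand of $v$ directly. Nonnegativity of each $\mathcal{E}_i$ turns $\bar{\mathcal{E}}_{t-1}(\theta_{t-1}^*)=0$ into $\mathcal{E}_i(\theta_{t-1}^*)=0$ for all $i$. The PSD null-space property then gives $\mathbf{H}_i(\theta_i^*)(\theta_{t-1}^*-\theta_i^*)=0$. Contrary to your remark, no induction is needed: passing from $\mathcal{E}_i(\theta_{t-1}^*)=0$ to the vanishing of $(\theta_{t-1}^*-\theta_i^*)^\intercal\mathbf{H}_i(\theta_i^*)(\theta_{t-1}^*-\theta_i^*)$ is just the Taylor expansion itself.

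Your route therefore uses only the hypothesis at step $t-1$, which is strictly weaker than the theorem's hypothesis. The price is the average-to-individual step. The paper's telescoping avoids that step, but only by assuming zero average forgetting at every earlier step.

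\textbf{Part (2).} Here you add the hypothesis of zero past forgetting, which the statement does not contain. You should say so explicitly, because this hypothesis cannot be removed. Take $\theta\in\mathbb{R}^2$ with quadratic losses, $\mathbf{H}_1=\mathbf{e}_1\mathbf{e}_1^\intercal$, $\mathbf{H}_2=\mathbf{e}_2\mathbf{e}_2^\intercal$, $\theta_1^*=0$ and $\theta_2^*=s\,\mathbf{e}_1$.
\begin{itemize}
\item With $\theta_3^*=0$, one gets $\mathcal{E}_1(\theta_3^*)=\mathcal{E}_2(\theta_3^*)=0$, yet $\Delta_3^\intercal(\mathbf{H}_1+\mathbf{H}_2)\Delta_3=s^2$.
\item With $\theta_3^*=\theta_2^*$, the quadratic form is $0$, yet $\mathcal{E}_1(\theta_3^*)=s^2/2$.
\end{itemize}

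The paper's proof reaches the unconditional equivalence only through two assertions. First, that $\mathcal{E}_\tau(\theta_t^*)=0$ for all $\tau<t$ implies $\bar{\mathcal{E}}_\tau(\theta_\tau^*)=0$ for all $\tau<t$. Second, that the vanishing of the quadratic form at step $t$ forces $\Delta_\tau^\intercal\sum_{o<\tau}\mathbf{H}_o(\theta_o^*)=0$ at every earlier $\tau$. Neither follows, and the example above violates both. Your conditional version is the form of Part (2) that is actually true.
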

Theorem~\ref{theorem:null-forgetting} offers two significant
implications for achieving zero forgetting.  First, even if zero
average forgetting ($\bar{\mathcal{E}}_{\tau}(\theta_{\tau}^*)=0$) is
achieved for all previous tasks $\tau<t$ when evaluated at their
respective optimal parameters, learning a new task $\mathcal{T}_t$
and thereby updating parameters (resulting in $\Delta_t \neq 0$) can
still cause considerable average forgetting. 
Second, Theorem~\ref{theorem:null-forgetting} (second half) provides
a direct mathematical condition for achieving true zero forgetting on
all past tasks (i.e., $\mathcal{E}_\tau(\theta_t^*)=0,
\forall\tau<t$) after the model learns task $\mathcal{T}_t$, i.e.,
the quadratic term
$\Delta_t^\intercal\left(\sum_{\tau=1}^{t-1}\mathbf{H}_{\tau}(\theta_{\tau}^*)\right)\Delta_t$
must be zero. Consequently, any continual learning algorithm aiming
for zero forgetting must be designed to ensure this quadratic term
vanishes. Indeed, existing zero-forgetting methods, such as
orthogonal gradient method~\cite{farajtabar2020orthogonal} and
projected gradient method~\cite{saha2021gradient}, work by satisfying
such a condition (see details in
Appendix~\ref{appendix:other_zero_forgetting_methods_placeholder}).


{Strict parameter isolation (SPI) which is used
  in~\cite{chaSSULSemanticSegmentation2021,yu2025ipseg} is a strategy
  that can satisfy
  the zero forgetting condition in
  Theorem~\ref{theorem:null-forgetting}. By optimizing a unique set of
  parameters for each new task while freezing those for previous tasks,
  the parameter update $\Delta_t$ is guaranteed to be in a subspace
  orthogonal to the parameters of all previous tasks. Consequently, the
  quadratic term $\Delta_t^\intercal(\sum_{\tau=1}^{t-1}
  \mathbf{H}_\tau)\Delta_t$ is always zero, thus ensuring theoretical
zero-forgetting.}


{Although the SPI strategy can theoretically prevent catastrophic
  forgetting by isolating task-specific parameters, its direct
  application in CISS introduces a critical challenge: the problem of
  incomparable outputs. In the SPI framework, each task-specific
  segmentation head is trained independently on a subset of classes.
  Consequently, the output logits from different heads are not
  mutually calibrated; a high score from one head is not directly
  comparable to a score from another trained on a different task.
  This renders the standard approach of applying a simple $argmax$
  operation across all heads' outputs to determine the final class
  for each pixel fundamentally flawed (also can be seen in
  Figure~\ref{fig:framework}(A)).
  This issue stems from the SPI model's inability to determine a
  pixel's task affiliation before classifying it. This problem is
  conceptually analogous to challenges in image-level continual
  learning. A prior study~\cite{kim2022theoretical}, for instance,
  factorizes the image-level prediction into an intra-task prediction
  and a task affiliation probability. Borrowing this formulation for
  our pixel-level problem, the prediction for a pixel $y_p$ can be written as:
  \begin{equation}\label{eq:prob_decomp}
    \begin{split}
      P\left(y_p = c \mid \mathbf{x} \right) = &\; P\left(y_p = c
      \mid \mathbf{x}, \mathcal{T}_t\right)\\
      &\cdot P\left(\mathcal{T}_t \mid \mathbf{x} \right), \quad
      \forall c \in \mathcal{C}_t\,,
    \end{split}
  \end{equation}
  where $p$ is the index of a pixel in the input image $\mathbf{x}$,
  and $c$ is a class learned from task $\mathcal{T}_t$.
  While SPI perfectly preserves the intra-task prediction term
  $P(y_p=c|x,\mathcal{T}_t)$ due to its zero-forgetting nature. There
  is no mechanism to estimate the crucial task affiliation probability
$P(\mathcal{T}_t|x)$, thus hindering effective final prediction.}

\begin{figure*}[t]
  \centering
  \includegraphics[width=\textwidth]{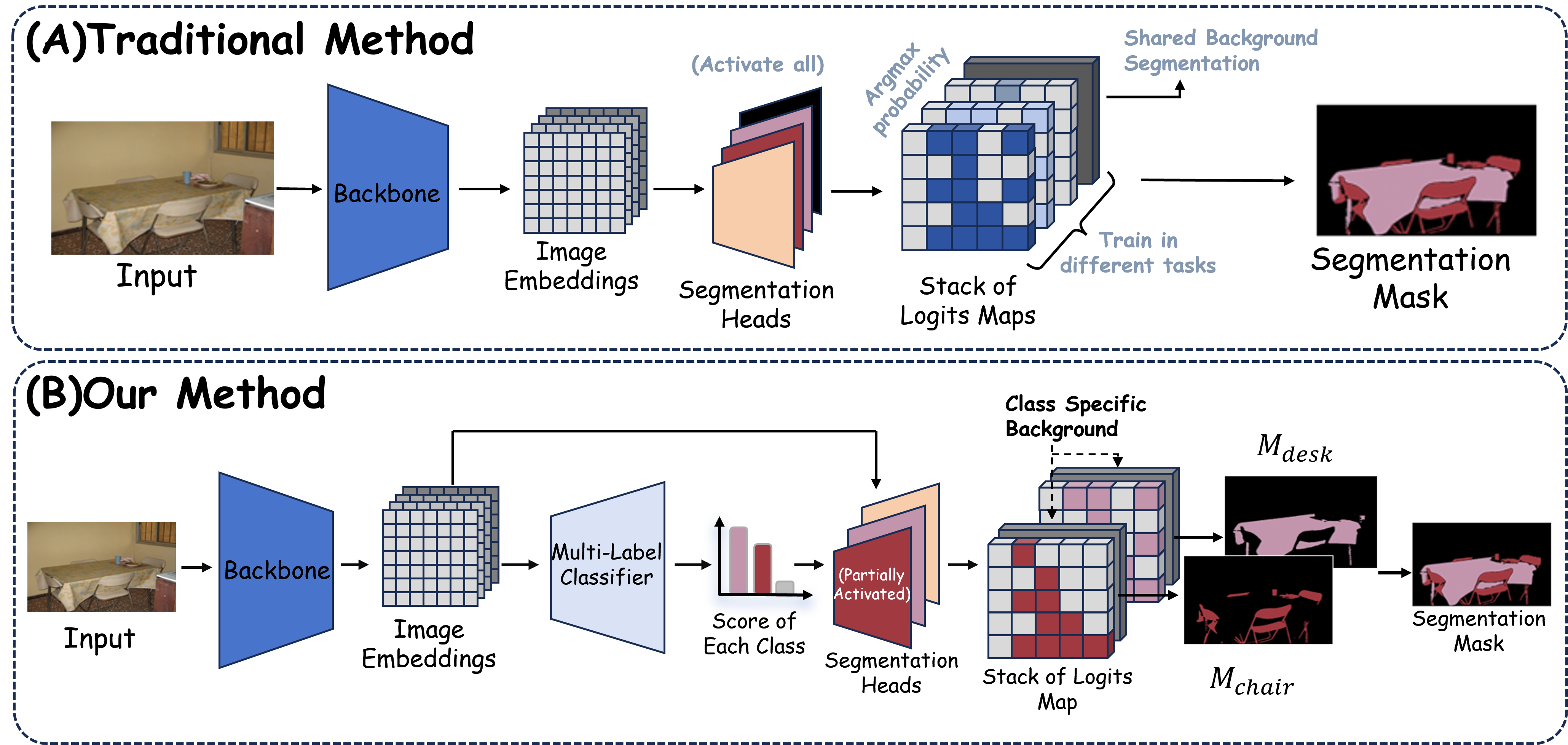}
  \caption{Demonstration of existing typical CISS framework and the
    proposed CogCaS framework. (A) Traditional CISS framework consists
    of a set of segmentation heads which are trained sequentially and
    activated for inference simultaneously. (B) our proposed CogCaS
    restructures the traditional CISS formulation into a dual-phase
    cascade using multi-label classifier and class-specific
    segmentation head. The multi-label classifier determine whether
    each learned class exists in the image, and only segmentation heads
    corresponding to existing classes are activated to produce a
    foreground-background mask. These masks are then fused to obtain
  the final segmentation mask using a mask fusion strategy.}
  \label{fig:framework}
\end{figure*}

\subsection{Class-Incremental Segmentation via Cognitive Cascades}
\label{subsec:method}
{To resolve the issue of incomparable outputs in
  Equation~(\ref{eq:prob_decomp}) and fully leverage the
  zero-forgetting property of SPI, we introduce \textbf{Cog}nitive
  \textbf{Cas}cade Segmentation (CogCaS), a novel framework that
  fundamentally restructures the CISS paradigm. Our approach is
  motivated by the coarse-to-fine strategy employed by human annotators
  who first identify the object categories present in an image before
  delineating their precise boundaries. CogCaS mimics this cognitive
  process by decoupling the problem into two sequential phases. First,
  an image-level classifier which acts as a Task Router determines the
  existence of all learned classes within the input image. Second, only
  the parameter-isolated segmentation heads corresponding to the
  detected classes are activated to perform binary
  foreground-background segmentation. This restructuring can be
  formalized(in Appendix~\ref{appendix:modeling}) by reframing the
  probabilistic decomposition from Equation~(\ref{eq:prob_decomp}) into
a more intuitive, class-centric model:}

\noindent\textbf{Notation Clarification.} Let $Y_p \in
\mathcal{C}_{[1:t]}\cup\{0\}$ denote the random variable representing
the class label of pixel $p$, and let $Z_c \in \{0, 1\}$ be a binary
random variable indicating whether class $c$ is present in image
$\mathbf{x}$ (i.e., $Z_c = 1$ if $\exists\, p : Y_p = c$). The
cascade decomposition is then:
\begin{equation}
  \label{eq:fine_prob_decomp}
  \begin{split}
    P(Y_p = c \mid \mathbf{x}) \propto\;
    &\underbrace{P\bigl(Y_p = c \mid \mathbf{x},\, Z_c = 1
    \bigr)}_{\text{Binary segmentation}}\\
    &\cdot\;
    \underbrace{P\bigl(Z_c = 1 \mid \mathbf{x}\bigr)}_{\text{Class
    existence (router)}},\;\; \forall c \in \mathcal{C}_{[1:t]},
  \end{split}
\end{equation}
where the first term is the conditional probability that pixel $p$
belongs to class $c$ given that class $c$ exists in the image, and
the second term is the existence probability estimated by the
multi-label classifier. In practice, we threshold the existence
probability $P(Z_c = 1|\mathbf{x}) \geq \alpha$ to obtain the
predicted class set $\mathcal{C}_{pred} = \{c : P(Z_c = 1|\mathbf{x})
\geq \alpha\}$.

{The overall pipeline of this framework is depicted in
  Figure~\ref{fig:framework}(B). An input image $\mathbf{x}$ is first
  passed through a task-shared, frozen pretrained feature extractor
  $\Phi$ to generate feature maps $F=\Phi(\mathbf{x})$. Subsequently,
$F$ is processed by the following two cascade phases.}

{\noindent\textbf{Phase I: Image-Level Category Recognition. }This
  phase is designed to infer the presence of each learned class. We use
  a multi-label classifier which operates on the feature maps $F$ to
  yield a class-existence probability $P(Z_c = 1|\mathbf{x})$ for each
  learned class. The set of predicted classes $\mathcal{C}_{pred}$ is
  then identified by applying a threshold $\alpha$ to these
  probabilities, i.e., $\mathcal{C}_{pred} = \{c : P(Z_c =
  1|\mathbf{x}) \geq \alpha\}$. Specifically, we adopt a Sigmoid
  function with a Binary Cross-Entropy loss, which is naturally suited
  for the multi-label classification setting. To enable continual
  learning, each class-specific weight block $\theta_c^{cls}$ is frozen
  after its corresponding training task is complete. This design not
  only achieves Strict Parameter Isolation (SPI) to circumvent
  catastrophic forgetting, but also fundamentally avoids the problem of
  cross-task output comparison by directly learning independent
class-existence probabilities.}

{\noindent\textbf{Phase II: Class-Specific Binary Segmentation. }This
  phase is responsible for estimating the conditional segmentation
  term, $P(Y_p=c|\mathbf{x},Z_c=1)$. For each class $c\in
  \mathcal{C}_{pred}$ identified in Phase I, its dedicated and
  parameter-isolated segmentation head $H_c(\cdot)$ is activated. This
  head takes the feature maps $F$ as input and produces a two-channel
  probability map, $M_c=(m_c^{bg},m_c^{fg})$, representing the
  probability of each pixel belonging to the ``relative background for
  class $c$'' and the ``foreground for class $c$,'' respectively. The
  binary segmentation schema offers key advantages: it simplifies the
  complex multi-class decision boundary into a \emph{single binary
  decision boundary}, leading to faster convergence and higher
fidelity.}

\begin{figure*}[!t]
  \centering
  \includegraphics[width=0.9\textwidth]{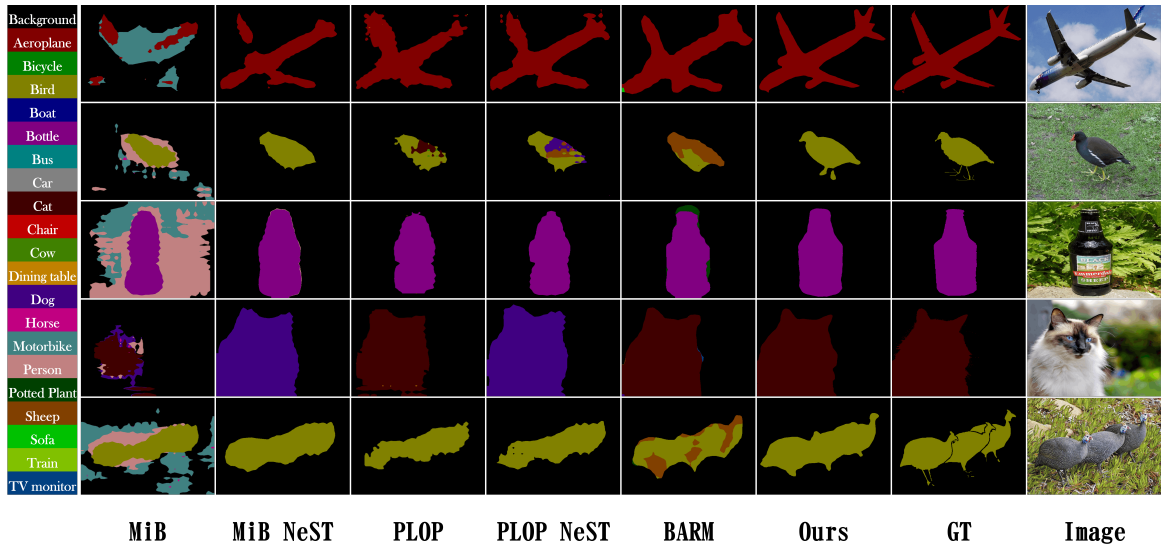}
  \caption{Representative segmentation results from different methods
    after the model learns all tasks in the Pascal VOC 2012 15-1 setting.
  }
  \label{fig:pascal}
\end{figure*}
{\noindent\textbf{Mask Fusion. }Since Phase II yields independent
  binary masks, a fusion step $\mathcal{U}$ is required to integrate
  them into a coherent multi-class segmentation map by resolving
  overlapping predictions. We evaluate several fusion strategies and
analyze their impact in Table~\ref{tab:fusion_strategy}.}

{\noindent\textbf{Precise Model Lifecycle. }To ensure reproducibility
  and clarity, we provide a rigorous description of the complete model
lifecycle:}

{\noindent\emph{Instantiation.} Given an input image $\mathbf{x}$,
  features are extracted as $F=\Phi(\mathbf{x})$, where $\Phi$ is a
  frozen pretrained backbone (e.g., ResNet-101 or Swin-L). When
  learning a new task $\mathcal{T}_t$, we instantiate only new
  parameters: (i) router weights $\theta_c^{cls}$ for each new class
  $c\in\mathcal{C}_t$, and (ii) binary segmentation heads $H_c$ with
  parameters $\theta_c^{seg}$ for each $c\in\mathcal{C}_t$. All
  parameters from previous tasks $\mathcal{C}_{1:t-1}$ remain strictly
frozen.}

{\noindent\emph{Training.} For task $\mathcal{T}_t$, we update only
  the newly instantiated parameters
  $\{\theta_c^{cls},\theta_c^{seg}\}_{c\in\mathcal{C}_t}$. Importantly,
  the multi-label classifier (Phase I) and segmentation heads (Phase
  II) are trained \emph{separately} to avoid interference. The
  training losses for each component are detailed in
  Appendix~\ref{appendix:training}. Gradients are computed only for
  $\{\theta_c^{cls},\theta_c^{seg}\}_{c\in\mathcal{C}_t}$; all other
parameters have zero gradient by design (SPI).}

{\noindent\emph{Inference.} At test time, we first compute existence
  probabilities for all learned classes, then activate only the
  segmenters corresponding to detected classes, and finally fuse their
  binary masks into the final segmentation map. The complete inference
pipeline is detailed in Appendix~\ref{appendix:training}.}

\section{Experiments}
\label{sec:exp}

\subsection{Setup}
\label{subsec:setup}
\noindent\textbf{Datasets.}
Following prior
work~\cite{cermelliModelingBackgroundIncremental2020,chaSSULSemanticSegmentation2021,samrep2021RECALL,yang2023distillation},
the proposed CogCaS was evaluated on two semantic segmentation
datasets PASCAL VOC~\cite{everinghamPascalVisualObject2010} and
ADE20K~\cite{zhouSceneParsingADE20K2017} with different complexity levels.
PASCAL VOC 
contains 20 object classes plus a background class, with 10,582
samples for training and 1,449 samples for validation, while the
large-scale dataset ADE20K presents a more challenging scenario, 
containing 150 foreground classes and one background class, with
20,210 and 2,000 samples for training and validation, respectively.

\noindent\textbf{CISS Settings.}
With each dataset, the widely used $M$-$N$ setting is adopted, with
$M$ being the number of foreground classes in the first task and $N$
the number of new foreground classes in each subsequent task. For
example, in the VOC 10-1 setting, the model first learns to segment
10 classes, then incrementally learns one new class in each subsequent task.


\noindent\textbf{Implementation details.}
The proposed CogCaS was trained using 8 NVIDIA GeForce RTX 4090 GPUs.
We conducted training on models utilizing both
ResNet-101~\cite{heDeepResidualLearning2016} and
Swin-L~\cite{liu2021swin} backbones. The Adam optimizer~\cite{adam}
was employed for training, with each task being trained for 90
epochs. The initial learning rate was set to $1\times10^{-5}$. More
details can be found in the Appendix~\ref{appendix:training}.




\begin{table*}[!t]
  \centering
  \caption[voc-bench]{Comparison with exsiting CISS methods on PASCAL
    VOC 2012 in mIoU (\%). The best results are marked in
    \textbf{bold}. $\circ$: ResNet101 backbone. $\diamond$: Swin-L
    backbone. $\dagger$: unlike other methods, this one is based on
  Mask2Former~\cite{chengPerPixelClassificationNot2021}.}
  \vspace{-2mm}
  \label{tab:voc}
  \resizebox{0.9\textwidth}{!}
  {
    \begin{tabular}{ll|ccc|ccc|ccc}
      \toprule
      & \multirow{2}{*}{Method}
      & \multicolumn{3}{c}{\textbf{10-1} (11 tasks)}
      & \multicolumn{3}{c}{\textbf{15-5} (2 tasks)}
      & \multicolumn{3}{c}{\textbf{15-1} (6 tasks)} \\

      & & 0-10 & 11-20 & all & 0-15 & 16-20 & all & 0-15 & 16-20 & all \\
      \midrule

      \rowcolor{blue!20} &  \textbf{Ours}$\circ$ & \textbf{73.9} &
      \textbf{70.2} & \textbf{72.1} & 75.5 & \textbf{70.3} &
      \textbf{74.1} & 75.5 & \textbf{71.4} & \textbf{74.4} \\
      \rowcolor{blue!20}& \textbf{Joint
      Deeplab-v3}$\circ$~\cite{chen2017rethinking} & 78.4 & 76.4 & 77.4 & 79.8
      & 72.4 & 77.4 & 79.8 & 72.4 & 77.4 \\
      \rowcolor{blue!20}& \textbf{Joint Ours}$\circ$ & 79.3 & 77.6 &
      78.4 & 79.2 & 76.2 & 78.4 & 79.2 & 76.2 & 78.4  \\
      \midrule
      & LwF-MC$\circ$~\cite{samrep2017iCaRL} & 4.7 & 5.9 & 5.0 & 58.1
      & 35.0 & 52.3 & 64.0 & 8.4 & 6.9  \\
      & ILT$\circ$~\cite{michieliContinualSemanticSegmentation2021} &
      7.2 & 3.7 & 5.5 & 67.1 & 39.2 & 60.5 & 8.8 & 8.0 & 8.6  \\
      & MiB$\circ$~\cite{cermelliModelingBackgroundIncremental2020} &
      31.5 & 13.1 & 22.7 &
      71.8 & 43.3 & 64.7 & 46.2 & 22.9 & 40.7  \\
      & MiB+NeST$\circ$~\cite{ECCV2024EarlyPP}& 39.4 & 21.1 & 30.6 &
      75.5 & 48.7 & 69.5 & 60.2 & 29.9 & 53.0 \\
      & PLOP $\circ$~\cite{douillardPLOPLearningForgetting2021}& 44.0
      & 15.5 & 30.5 & 75.4 & 49.6 & 69.3 & 64.1 & 20.1 & 53.1\\
      & PLOP+NeST$\circ$~\cite{ECCV2024EarlyPP} & 47.2 & 16.3 & 32.4
      & 77.6 & 55.8 & 72.4 & 67.2 & 25.7 & 57.3  \\
      & BARM$\circ$~\cite{ECCV2024BackgroundAW} & 72.2 & 49.8 & 61.9
      & 74.9 & 69.5 & 73.6 & \textbf{77.3} & 45.8 & 61.9\\
      & PLOP+LCKD$\circ$~\cite{yang2023distillation} & --- & --- &
      --- & 75.2 & 54.8 & 71.1 & 69.3 & 30.9 & 61.1  \\
      & SSUL$\circ$~\cite{chaSSULSemanticSegmentation2021} & 71.3 &
      46.0 & 59.3 & 77.8
      & 50.1 & 71.2 & \textbf{77.3} & 36.6 & 67.6  \\
      &
      RCIL$\circ$~\cite{zhangRepresentationCompensationNetworks2022}
      & 55.4 & 15.1 & 34.3 &
      \textbf{78.8} & 52.0 & 72.4 & 70.6 & 23.7 & 59.4  \\
      & IDEC$\circ$~\cite{InheritwithDistill} & 70.7 & 46.3 & 59.1 & 78.0 &
      51.8 & 71.8 & 77.0 & 36.5 & 67.3  \\


      \cmidrule{1-11}
      \rowcolor{blue!20} & \textbf{Ours}$\diamond$ & \textbf{76.1} &
      \textbf{75.7} & \textbf{75.9} & 78.3 & \textbf{74.9} &
      \textbf{77.8} & 78.4 & \textbf{72.5} & \textbf{76.9} \\
      \rowcolor{blue!20}& \textbf{Joint
      Deeplab-v3}$\diamond$~\cite{chen2017rethinking} & 81.4 & 78.4 &  79.9 &
      80.8 & 77.3 & 79.9 & 80.8 & 77.3 & 79.9 \\
      \rowcolor{blue!20}& \textbf{Joint Ours}$\diamond$ & 82.7 & 80.9
      & 81.8 & 81.3 & 83.4 & 81.8 & 81.3 & 83.4 & 81.8 \\
      \midrule

      & MicroSeg$\diamond$~\cite{zhangMiningUnseenClasses2022} & 73.5 & 53.0 &
      63.8 & \textbf{81.9} & 54.0 & 75.2 & \textbf{80.5} & 40.8 & 71.0  \\
      &
      MiB$\diamond$~\cite{cermelliModelingBackgroundIncremental2020}
      & 35.7 & 14.8 & 26.7
      & 74.3 & 45.1 & 67.3 & 48.7 & 19.5 & 41.7\\
      & MiB+NeST$\diamond$~\cite{ECCV2024EarlyPP} & 41.3 & 24.1 &
      33.1 & 77.8 & 50.1 & 71.2 & 63.2 & 23.5 & 53.7 \\
      & PLOP$\diamond$~\cite{douillardPLOPLearningForgetting2021} &
      47.2 & 18.4 & 33.5 & 79.2 & 50.2 & 72.3 & 67.6 & 25.2 & 57.6 \\
      & PLOP+NeST$\circ$~\cite{ECCV2024EarlyPP} & 49.2 & 19.8 & 35.2
      & 81.6 & 55.8 & 75.4 & 72.2 & 33.7 & 63.1  \\
      & BARM$\circ$~\cite{ECCV2024BackgroundAW} & 74.2 & 53.8 & 64.4
      & 77.8 & 72.1 & 76.4 & {79.3} & 48.1 & 71.8 \\
      & SSUL$\diamond$~\cite{chaSSULSemanticSegmentation2021} & 74.3
      & 51.0 & 63.2 &
      79.7 & 55.3 & 73.9 & 78.1 & 33.4 & 67.5  \\
      & CoMasTRe$\diamond\dagger$~\cite{CoMasTRe2024} & --- & --- &
      --- & 79.7 & 51.9 & 73.1 & 69.8 & 43.6 & 63.5  \\
      &
      CoMFormer$\diamond\dagger$~\cite{cermelliCoMFormerContinualLearning2023}
      & --- & --- & --- &74.7 & 54.3 & 71.1 & 70.8 & 32.2 & 61.6\\
      \bottomrule
    \end{tabular}
  }
  \vspace{-1.5mm}
\end{table*}

\begin{table}[!tbp]
  \centering
  \caption[ade20k-bench]{Comparison with existing CISS methods on
    ADE20K using Swin-L backbone. $\dagger$: unlike other methods,
    this one is based on Mask2Former~\cite{chengPerPixelClassificationNot2021}.
  }
  \vspace{-2mm}
  \resizebox{\columnwidth}{!}
  {
    \small
    \begin{tabular}{ll|ccc|ccc|ccc}
      \toprule
      &\multirow{2}{*}{\textbf{Method}} &
      \multicolumn{3}{c}{\textbf{100-50} (2 tasks)} &
      \multicolumn{3}{c}{\textbf{100-10} (6 tasks)} &
      \multicolumn{3}{c}{\textbf{100-5} (11 tasks)} \\
      & & \textit{0-100} & \textit{101-150} & \textit{all} &
      \textit{0-100} & \textit{101-150} & \textit{all} &
      \textit{0-100} & \textit{101-150} & \textit{all}\\
      \midrule
      \rowcolor{blue!20}
      & \textbf{Ours}  & {41.2} & \textbf{29.4} & {37.3} &
      \textbf{42.3} & \textbf{25.6} & \textbf{36.8} & {40.1} &
      \textbf{24.7} & \textbf{35.0} \\
      \rowcolor{blue!20}
      & \textbf{Joint Deeplab-v3}~\cite{chen2017rethinking}& 47.2 & 31.8 &
      42.1 & 47.2 & 21.8 & 42.1& 47.2 & 21.8 & 42.1 \\
      \rowcolor{blue!20}
      & \textbf{Joint Ours} & 47.8 & 38.7 & 44.7 & 47.8 & 38.7 & 44.7
      & 47.8 & 38.7 & 44.7 \\
      \midrule
      & MiB~\cite{cermelliModelingBackgroundIncremental2020}& 39.0 & 16.7 & 31.2 & 36.6 &
      9.8 & 27.7 & 34.7 & 4.8 & 24.7 \\
      & MiB+NeST~\cite{ECCV2024EarlyPP}& 38.8 & 23.1 & 33.5 & 38.8 &
      19.1 & 32.2 & 35.2 & 13.6 & 28.1 \\
      & PLOP~\cite{douillardPLOPLearningForgetting2021}& 40.4 & 13.4
      & 31.5 & 39.4 & 12.6 & 30.1 & 36.9 & 6.2 & 26.7 \\
      & PLOP+NeST~\cite{ECCV2024EarlyPP}& 40.8 & 22.8 & 34.8 & 39.4 &
      20.5 & 33.2 & 38.3 & 15.4 & 30.7\\
      & BARM~\cite{ECCV2024BackgroundAW}& 42.0 & 23.0 & 35.7 & 41.1 &
      23.1 & 35.2 & 40.5 & 21.2 & 34.1 \\
      &FALCON~\cite{FALCON}& \textbf{45.9} & 29.1 & \textbf{40.3} &
      41.1 & 23.2 & 35.2 & \textbf{40.8} & 18.9 & 33.5\\
      &
      CoMFormer$\dagger$~\cite{cermelliCoMFormerContinualLearning2023}
      & 44.7 & 26.2 & 38.4 & 40.6 & 15.6 & 32.3 & 39.5 & 13.6 & 30.9\\
      &CoMasTRe$\dagger$~\cite{CoMasTRe2024}& 45.7 & 26.0 & 39.2 &
      42.3 & 18.4 & 34.4 & \textbf{40.8} & 15.8 & 32.6\\


      \bottomrule
      \label{tab:ade}
    \end{tabular}
  }
  \vspace{-5mm}
\end{table}

\begin{table}[!t]
\centering
\caption{Comparison with existing CISS methods under more challenging continual learning settings.
}
\vspace{-2mm}
\label{tab:difficult}
\resizebox{\columnwidth}{!}{%
\begin{tabular}{l|ccc|ccc|ccc}
\toprule
\multirow{2}{*}{\textbf{Method}}
    & \multicolumn{3}{c}{\textbf{VOC 1-1 (20 tasks)}}
    & \multicolumn{3}{c}{\textbf{VOC 2-1 (19 tasks)}}
    & \multicolumn{3}{c}{\textbf{VOC 2-2 (10 tasks)}} \\

    & 0-1 & 2-20 & all & 0-2 & 3-20 & all & 0-2 & 3-20 & all \\
\midrule
\rowcolor{blue!20} Ours & \textbf{79.4} & \textbf{70.1} & \textbf{70.9} & \textbf{76.3} & \textbf{71.2} & \textbf{71.9} & \textbf{75.9} & \textbf{70.8} & \textbf{71.5} \\
\midrule
MiB~\cite{cermelliModelingBackgroundIncremental2020} & 27.3 & 6.4 & 8.3 & 23.6 & 7.9 & 10.14 & 41.1 & 23.4 & 25.9 \\
PLOP~\cite{douillardPLOPLearningForgetting2021} & 25.4 & 4.2 & 6.2 & 19.4 & 6.2 & 8.1 & 39.7 & 22.8 & 25.2 \\
MiB+NeST~\cite{ECCV2024EarlyPP} & 28.1 & 6.8 & 8.7 & 24.5 & 8.1 & 10.4 & 40.4 & 25.8 & 27.8 \\
PLOP+NeST~\cite{ECCV2024EarlyPP} & 32.5 & 4.6 & 7.3 & 20.1 & 7.9 & 10.5 & 38.1 & 23.5 & 25.5 \\
SSUL~\cite{chaSSULSemanticSegmentation2021} & 60.1 & 29.6 & 32.5 & 59.6 & 34.7 & 38.2 & 60.3 & 40.6 & 44.0 \\
IPSeg~\cite{yu2025ipseg} & 61.8 & 30.2 & 33.2 & 60.1 & 32.6 & 36.2 & 64.7 & 49.5 & 51.4 \\

\bottomrule
\end{tabular}%
}
\vspace{-3mm}
\end{table}
\subsection{Main Results}

Experimental results demonstrate the efficacy of the proposed CogCaS
method on the PASCAL VOC 2012 and ADE20K datasets, as presented in
Table~\ref{tab:voc} and Table~\ref{tab:ade} respectively.

Our CogCaS as a non-replay method was compared with basic and
state-of-the-art non-replay baselines. As Table~\ref{tab:voc} shows,
on the PASCAL VOC 2012 dataset, CogCaS exhibited superior performance
across various incremental learning configurations. For example, in
the VOC 10-1 setting (11 tasks), CogCaS achieved the highest mean
Intersection over Union (mIoU) of 70.2\% for new classes (11-20) and
a leading overall mIoU of 72.1\%.
The superiority of CogCaS was more pronounced on the complex ADE20K
dataset 
across all evaluated incremental settings (Table~\ref{tab:ade}).
Figure~\ref{fig:pascal} visually confirms the superior performance of
our method. These results consistently support the efficacy of our
CogCaS in learning new knowledge and preserving old knowledge in both
small-scale and large-scale incremental scenarios.

We also compare with recent large-model-based CISS methods (e.g.,
SAM-based DecoupleCSS with $\sim$632M parameters) in
Appendix~\ref{appendix:large_model}. Our architecture is orthogonal
to backbone choice and offers a favorable balance between performance
and efficiency. Additional experiments on task-shared parameters and
LoRA are provided in Appendix~\ref{appendix:additional_study}.

To further confirm the robustness of our method,
experiments under more challenging conditions were performed in which
tasks are more numerous with fewer classes to be learned within each
class. As Table~\ref{tab:difficult} shows, our CogCaS significantly
outperforms traditional knowledge distillation methods (MiB, PLOP,
and NeST variants) and parameter-isolation strategies (SSUL, IPSeg).
For example, in the VOC 1-1 setting (totally 20 tasks), our method
achieves 70.9\% overall mIoU versus only 7.3\% for PLOP+NeST and
33.2\% for IPSeg. In the VOC 2-2 setting (10 tasks), our method
reaches 71.5\% compared to 27.8\% (MiB+NeST) and 51.4\% (IPSeg). To
verify statistical robustness, we re-evaluated the VOC 1-1 setting
using three independent random seeds, yielding an average mIoU of
71.1\% ($\pm$ 0.8\%), which is highly consistent with the reported
70.9\%. This confirms that the substantial performance gap over IPSeg
(33.2\%) is statistically stable. The low variance stems from the
inherent stability of our SPI design: increasing task count only adds
training cost without affecting previously learned knowledge.

\begin{table}[t]
    \centering
    \caption{Phase~I class-existence detection on the evaluation split (\%). Results are averaged across all settings in the datasets.}
    \label{tab:phase1}
    \small
    \begin{tabular}{lccc}
    \toprule
    \textbf{Dataset} & \textbf{mAP$\uparrow$}  & \textbf{Prec.$\uparrow$} & \textbf{Rec.$\uparrow$} \\
    \midrule
    PASCAL VOC 2012  & 85.12 & 93.47 & 90.25 \\
    ADE20K & 79.84  & 73.86 & 78.03 \\
    \bottomrule
    \end{tabular}
\end{table}
These results clearly demonstrate our CogCaS can well learn new
classes and preserve old knowledge even in a long CISS learning process.



To make the first phase explicit, Table~\ref{tab:phase1} summarises
class-existence detection metrics (mAP, precision, and recall) on
both benchmarks.

\subsection{Ablation Studies}


To assess the practical impact of the classification head during
inference, we conducted comparative experiments with three distinct
model configurations. The first, termed the ``Full Model'', utilizes
the complete model architecture. The second, the ``Segmentation
Only'' version, deactivates the classification head during testing,
relying solely on the segmentation heads learned during training. The
third, the ``Oracle'' version, substitutes the classification head's
output with ground truth labels to isolate the component's error
contribution. By comparing these three settings using standard
semantic segmentation metrics, we can precisely determine the
classification head's actual contribution and significance to our
decoupled segmentation framework at test time.

The ablation experiments of this study in Table~\ref{tab:ablation}
show that there is just small difference in performance between the
``Oracle'' configuration (using real labels) and the ``complete
model'' (for example, both are 70.9\% and 71.2\%), indicating that
the multi-label classifier of the model can effectively handle the 20
categories of VOCs. However, the performance of the ``segmentation
only'' configuration declined, indicating that learning the
relationship between foreground and background remains a challenge
even with the inclusion of near-OOD data during training.

\begin{table}[!tbp]
\centering
\caption{Ablation experiments with respect to the classification head using parameters trained under different task settings}
\vspace{-2mm}
\label{tab:ablation}
\resizebox{\columnwidth}{!}{
\begin{tabular}{l|ccc|ccc|ccc|ccc}
\toprule
\multirow{2}{*}{\textbf{Method}} 
    & \multicolumn{3}{c}{\textbf{VOC 1-1 (20 tasks)}} 
    & \multicolumn{3}{c}{\textbf{VOC 2-1 (19 tasks)}} 
    & \multicolumn{3}{c}{\textbf{VOC 2-2 (10 tasks)}} 
    & \multicolumn{3}{c}{\textbf{ADE 100-5 (11 tasks)}} \\

    & 0-1 & 2-20 & all & 0-2 & 3-20 & all & 0-2 & 3-20 & all & 0-100 & 101-150 & all \\
\midrule

Segmentation Only & 18.2 & 13.1 & 13.5 & 14.3 & 17.2 & 16.7 & 13.8 & 16.5 & 16.1 & 6.7 & 9.5 & 7.6 \\
Full Model & 79.4 & 70.1 & 70.9 & 76.3 & 71.2 & 71.9 & 75.9 & 70.8 & 71.5 & 40.1 & 24.7 & 35.0 \\
Oracle & 80.2 & 70.3 & 71.2 & 77.4 & 71.5 & 72.3 & 77.4 & 71.8 & 72.4 & 48.6 & 31.8 & 43.0 \\
\bottomrule
\end{tabular}}
\vspace{-3mm}
\end{table}

On the ADE20K dataset with more complex categories (150 classes), the
mIoU configured with ``Oracle'' (43.0\%) was significantly better
than that of the ``Full Model'' (35.0\%), revealing that the
classifier encounters challenges when facing a large number of
categories, and its errors have a significant impact on the
segmentation performance.
This cross-dataset contrast indicates that the classifier is
sufficiently accurate for the 20-class VOC, where most detection
errors fall on small or occluded objects that the segmentation head
would also miss; as the class count grows to 150 on ADE20K, the
classifier becomes the performance bottleneck and the Oracle gap
widens accordingly.
These results jointly prove that the classification head is a key
component in this decoupling framework. Especially when there are
many categories, its accuracy is crucial to the final segmentation
effect, and removing the classification head usually leads to performance loss.




\subsection{Sensitivity Study}
\label{exp:mask}

\begin{table}[t]
  \centering
  \caption{Sensitivity analysis of mask fusion strategies on VOC
  across three incremental settings. Results are reported in mIoU (\%).}
  \label{tab:fusion_strategy}
  \resizebox{\columnwidth}{!}{%
    \begin{tabular}{lccc}
      \toprule
      Fusion Strategy   & VOC 10-1 & VOC 15-5 & VOC 15-1 \\
      \midrule
      Logits-based      & 71.4     & 73.3     & 73.4     \\
      Random            & 71.9     & 73.2     & 73.9     \\
      Strict            & 70.9     & 72.8     & 73.0     \\
      Distributed (Ours)& 72.1     & 74.1     & 74.4     \\
      \textbf{Loose}    & \textbf{72.8} & \textbf{74.9} & \textbf{75.2} \\
      \bottomrule
  \end{tabular}}
\end{table}

\textbf{Mask Fusion Strategy Analysis.}
To handle overlapping predictions in segmentation masks, we evaluate
five fusion strategies: (1) \textit{Logits-based}: selects the class
with highest confidence; (2) \textit{Random}: randomly chooses among
overlapping predictions; (3) \textit{Strict}: assigns overlapping
pixels to background; (4) \textit{Distributed}: prioritizes rare
categories to preserve small objects; and (5) \textit{Loose}: accepts
predictions containing the ground truth category. As shown in
Table~\ref{tab:fusion_strategy}, the \textit{Loose} strategy achieves
superior performance across all settings, followed by our
\textit{Distributed} approach. The \textit{Logits-based} and
\textit{Random} strategies show comparable results, while
\textit{Strict} performs worst due to its conservative background
assignment. These results demonstrate the importance of appropriate
overlap handling in incremental segmentation.
We further investigate the effect of task-shared parameters and
class-specific LoRA in
Appendix~\ref{appendix:additional_study}.

\section{Conclusion}
This study provides a theoretical analysis to understand the limitations of existing CISS methods and introduces a novel dual-phase CISS framework, where the segmentation task is split into two disentangled stages. Crucially, the SPI strategy in our design allows the framework to achieve a zero-forgetting rate for knowledge from previous tasks. As a result, the model’s performance is not significantly affected by the number of tasks in CISS, showing strong robustness in long-sequence scenarios where other methods fail. A main limitation is that, although the model’s performance can approach the upper bound of joint training, it requires substantial training resources. Future work includes exploring class-specific fine-tuning of the feature encoder for other imaging modalities and using existing foundational models as segmentation heads in the framework.

\section*{Impact Statement}
This paper presents work whose goal is to advance the field
of Machine Learning. There are many potential societal
consequences of our work, none which we feel must be
specifically highlighted here.

\bibliography{references}
\bibliographystyle{icml2026}

\newpage
\appendix
\onecolumn
\section{Theoretical Proofs}
\label{appendix:theoretical_analysis}

\subsection{Forgetting Rate Analysis}
\label{appendix:average_fogetting}

For the forgetting rate in
Definition~\ref{definition:forgetting_rate}, the average forgetting
rate $\mathcal{E}_{\tau}(\theta)$is defined as the arithmetic average
of the forgetting rates of historical tasks:

$$\mathcal{E}_{\tau}(\theta)=\mathcal{L}^{val}_{\tau}(\theta)-\mathcal{L}^{val}_{\tau}(\theta_{\tau}^*)$$

Applying Taylor expansion to the above equation gives us:

\begin{equation}
  \label{eq:taylor_x}
  \begin{aligned}
    \mathcal{E}_\tau(\theta)=(\theta-\theta_\tau^*)^\intercal{\nabla}\mathcal{L}^{val}_\tau({\theta}_\tau^*)+\frac{1}{2}\left({\theta}-{\theta}_\tau^*\right)^\intercal\mathbf{H}_\tau(\theta_{\tau}^*)({\theta}-{\theta}_\tau^*)+O(\|{\theta}
    -{\theta}_\tau^*\|^3)
  \end{aligned}
\end{equation}

In Equation~(\ref{eq:taylor_x}):
\begin{itemize}
  \item First order
    $(\theta-\theta_\tau^*)^\intercal{\nabla}\mathcal{L}^{val}_\tau({\theta}_\tau^*)$:
    said loss function near the optimal parameters of linear change;
  \item Second order terms
    $\frac{1}{2}\left({\theta}-{\theta}_\tau^*\right)^\intercal\mathbf{H}_\tau(\theta_{\tau}^*)({\theta}-{\theta}_\tau^*)$:
    by Hessian Matrix $\mathbf{H}_\tau(\theta_{\tau}^*)$ said the
    local curvature of loss function;
  \item high order events $O(\|{\theta}-{\theta}_\tau^*\|^3)$: said
    the higher order nonlinear effects.
\end{itemize}

In Assumption~\ref{assumption:convergence}:
$\nabla\mathcal{L}_{\tau}(\theta_{\tau}^{*})\leq \epsilon$, including
$\epsilon$ is a limitless tends to zero (in the optimal parameter
  $\theta_{\tau}^*$ , The gradient of the loss function has gone to
zero), so the first-order term in the Taylor expansion is ignored,
and the forgetting rate is dominated by the second-order term:
\begin{equation}
  \label{eq:taylor}
  \mathcal{E}_\tau(\theta)=\frac{1}{2}\left({\theta}-{\theta}_\tau^*\right)^\intercal\mathbf{H}_\tau(\theta_{\tau}^
  *)({\theta}-{\theta}_\tau^*)+O(\|{\theta}-{\theta}_\tau^*\|^3)
\end{equation}

For two specific tasks $i,j$and the optimal parameters
$\theta_{j}^*$on the tasks $j$(assuming that the parameters vary
  within a range of $\delta$, That's
$\|\theta-\theta_{t}^*\|\leq\delta$) (we've added the range of the
parameter to our assumption). We find that based on
Equation~(\ref{eq:taylor}), this forgetting rate can be simplified to
Equation~(\ref{eq:forgetting_rate_ij}), and the simplified formula is
as follows:

\begin{equation*}
  \begin{aligned}
    \mathcal{E}_{i}(\theta_{j}^*) &= \frac{1}{2}
    (\theta_j^*-\theta_i^*)^\intercal  \mathbf{H}_i(\theta_{i}^*)
    (\theta_j^*-\theta_i^*) + \mathcal{O}(\delta^3) \\
    &= \frac{1}{2} \left(\sum_{\tau=i+1}^j \Delta_\tau
    \right)^\intercal  \mathbf{H}_i(\theta_{i}^*)
    \left(\sum_{\tau=i+1}^j \Delta_\tau \right) + \mathcal{O}(\delta^3) \\
    &= \frac{1}{2} \left(\Delta_{j} + \sum\limits_{\tau=i+1}^{j-1}
    \Delta_{\tau} \right)^\intercal  \mathbf{H}_{i}(\theta_{i}^{*})
    \left(\Delta_{j} + \sum\limits_{\tau=i+1}^{j-1} \Delta_{\tau}
    \right) +\mathcal{O}(\delta^3)\\
    &= \underbrace{\frac{1}{2} \left(\sum_{\tau=i+1}^{j-1}
      \Delta_\tau \right)^\intercal  \mathbf{H}_i(\theta_{i}^*)
      \left(\sum_{\tau=i+1}^{j-1} \Delta_\tau
    \right)}_{\mathcal{E}_i(\theta_{j-1}^*)}  + \frac{1}{2}
    \Delta_j^\intercal  \mathbf{H}_j(\theta_{j}^*)  \Delta_j\\
    &\quad + \frac{1}{2} \left(\sum_{\tau=i+1}^{j-1} \Delta_\tau
    \right)^\intercal  \mathbf{H}_i(\theta_{i}^*)  \Delta_j  +
    \frac{1}{2} \Delta_j^\intercal  \mathbf{H}_i(\theta_{i}^*)
    \left(\sum_{\tau=i+1}^{j-1} \Delta_\tau \right) + \mathcal{O}(\delta^3) \\
    &= \mathcal{E}_i(\theta_{j-1}^*) + \frac{1}{2} \Delta_j^\intercal
    \mathbf{H}_i(\theta_{i}^*)  \Delta_j +
    \left(\sum_{\tau=i+1}^{j-1} \Delta_\tau \right)^\intercal
    \mathbf{H}_i(\theta_{i}^*)  \Delta_j + \mathcal{O}(\delta^3)
  \end{aligned}
\end{equation*}

The Forgetting rate for task $i$ and optimal parameter $\theta_j^*$
can be expressed as Equation~(\ref{eq:forgetting_rate_ij})

\begin{equation}
  \label{eq:forgetting_rate_ij}
  \mathcal{E}_i(\theta_j^*)=\mathcal{E}_i(\theta_{j-1}^*) +
  \frac{1}{2} \Delta_j^\intercal  \mathbf{H}_i(\theta_{i}^*)
  \Delta_j + \left(\sum_{\tau=i+1}^{j-1} \Delta_\tau
  \right)^\intercal  \mathbf{H}_i(\theta_{i}^*)  \Delta_j +
  \mathcal{O}(\delta^3)
\end{equation}

In Equation~(\ref{eq:forgetting_rate_ij}), it consists of several
parts: past forgotten rate: $\mathcal{E}_i(\theta_{j-1}^*)$, namely
after the completion of the task $j-1$ for task $i$ forgotten;
independent effects of the current parameter update:
$\frac{1}{2}\Delta_j^\intercal\mathbf{H}_i(\theta_i^*)\Delta_j$,
directly caused by the parameter update $\Delta_j$of task $j$;
interaction between historical and current updates :
$\left(\sum_{\tau=i+1}^{j-1}\Delta_\tau\right)^\intercal\mathbf{H}_i(\theta_i^*)\Delta_j$,
which reflects the nonlinear superposition effect of the parameter
update sequence, and represents the inner product of the historical
update and the current update. If the two directions are negatively
correlated under the measure of $\mathbf{H}_i(\theta_i^*)$(e.g.,
orthogonal or reverse), forgetting may be alleviated. On the
contrary, if the direction is consistent, the forgetting is aggravated.

Similarly, for the average forgetting rate
$\bar{\mathcal{E}}_t(\theta_t^*)=\frac{1}{t-1}\sum_{\tau=1}^{t-1}\mathcal{E}_\tau(\theta_t^*)$,
we can also simplify it through Equation~(\ref{eq:taylor})
\begin{equation*}
  \begin{aligned}
    \begin{aligned}&\bar{\mathcal{E}_{t}}(\theta_{t}^*)=\frac{1}{t-1}\sum_{\tau=1}^{t-1}\left(\mathcal{E}_\tau(\theta_{t-1}^*)+\frac{1}{2}{\Delta}_t^\intercal\mathbf{H}_\tau(\theta_{\tau}^*){\Delta}_t+\left(\sum_{o=\tau+1}^{t-1}{\Delta}_o\right)^\intercal\mathbf{H}_\tau(\theta_{\tau}^*){\Delta}_t\right)+\mathcal{O}(\delta^3)\\
      &=\frac{1}{t-1}\left(\sum_{\tau=1}^{t-2}\left(\mathcal{E}_\tau(\theta_{t-1}^*)\right)+\frac{1}{2}\sum_{\tau=1}^{t-1}\Delta_t^\intercal\mathbf{H}_\tau(\theta_\tau^*)\Delta_t+\sum_{\tau=1}^{t-1}\left(\sum_{o=\tau+1}^{t-1}{\Delta}_o\right)^\intercal\mathbf{H}_\tau(\theta_{\tau}^*){\Delta}_t\right)+\mathcal{O}(\delta^3)\\
      &=\frac{t-2}{t-1}
      \bar{\mathcal{E}}_{t-1}(\theta_{t-1}^*)+\frac{1}{2(t-1)}\Delta_t^\intercal\left(\sum_{\tau=1}^{t-1}\mathbf{H}_\tau(\theta_{\tau}^*)\right)\Delta_t+\frac{1}{t-1}\left(\underbrace{\sum_{\tau=1}^{t-1}(\theta_{t-1}^*-\theta_\tau^*)^\intercal\mathbf{H}_\tau(\theta_{\tau}^*)}_{v_t^\intercal}\right)\Delta_t\\
      &+\mathcal{O}(\delta^3)\\
      &=\frac{1}{t-1}\left((t-2)
      \bar{\mathcal{E}}_{{t-1}}(\theta_{t-1}^*)+\frac{1}{2}\Delta_t^\intercal(\sum_{\tau=1}^{t-1}\mathbf{H}_\tau(\theta_\tau^*))\Delta_t+v_t^\intercal\Delta_t\right)+\mathcal{O}(\delta^3)
    \end{aligned}
  \end{aligned}
\end{equation*}

In the above simplification, each average forgetting rate has a
specific $v$, which we denote as $v_t$, representing that it belongs
to $\bar{\mathcal{E}}_t(\theta_t^*)$

\begin{equation}
  \label{eq:average_ratev1}
  \bar{\mathcal{E}}_t(\theta_t^*)=\frac{1}{t-1}\left((t-2)\cdot
  \bar{\mathcal{E}}_{{t-1}}(\theta_{t-1}^*)+\frac{1}{2}\Delta_t^\intercal(\sum_{o=1}^{t-1}\mathbf{H}_o^\star)\Delta_t+v_t^\intercal\Delta_t\right)+\mathcal{O}(\delta^3)
\end{equation}

From this equation, we can observe that the average forgetting rate
for task $t$, $\bar{\mathcal{E}}_t(\theta_t^*)$, is influenced by
three principal terms (ignoring the higher-order term
$\mathcal{O}(\delta^3)$) that are averaged:
\begin{itemize}
  \item $(t-2)\cdot\bar{\mathcal{E}}_{t-1}(\theta_{t-1}^*)$ relates
    to the average forgetting rate of the immediately preceding task.
  \item
    $\frac{1}{2}\Delta_t^\intercal(\sum_{o=1}^{t-1}\mathbf{H}_o^\star)\Delta_t$
    captures the impact of the parameter change $\Delta_t$ in
    conjunction with the cumulative Hessian matrices from all prior
    tasks (from $o=1$ to $t-1$).
\end{itemize}

Building on these observations, a further hypothesis can be
formulated: It is proposed that if the average forgetting rate for
each task from $k=2$ up to $k=t-1$ is zero (i.e., the first two items
  in $\bar{\mathcal{E}}_k(\theta_k^*)$ is zero for all
$k\in\{2,3,\ldots,t-1\})$, then for the current $t$-th task, the
component $v^\mathrm{T}\Delta_t$ is also hypothesized to be zero.

We will use mathematical induction to prove this hypothesis. At
first, we suppose the first two terms of
$\bar{\mathcal{E}}_2(\theta_2^*)$ are both 0, then we can get $v_3=0$:
\begin{equation*}
  \begin{aligned}
    \bar{\mathcal{E}}_{3}(\theta_{3}^*)&=\frac{1}{2}\left(1\cdot
    \bar{\mathcal{E}}_{2}(\theta_{2}^*)+\frac{1}{2}\Delta_{3}^{\intercal}(H_{2}(\theta_{2}^*)+H_{1}(\theta_{1}^*))\Delta_{3}+\sum_{t=1}^{2}(\boldsymbol{\theta}_{2}-\boldsymbol{\theta}_{t})^{\intercal}\boldsymbol{H}_{t}^{\star}\boldsymbol{\Delta}_{3}\right)\\
    &=0+\frac{1}{2}\Delta_{3}^{\mathsf{T}}(\frac{1}{2}H_{2}(\theta_{2}^*)+\frac{1}{2}H_{1}(\theta_{1}^*))\Delta_{3}+\frac{1}{2}\underbrace{\Delta_{2}^{\mathsf{T}}H_{1}^{1}}_{=\mathbf{0}}\Delta_{3}+\frac{1}{2}\underbrace{(\boldsymbol{\theta}_{2}^{\mathsf{T}}-\boldsymbol{\theta}_{2})}_{=\mathbf{0}}H_{t}^{\star}\Delta_{3}
  \end{aligned}\,.
\end{equation*}

Then, we suppose the first tow terms of
$\bar{\mathcal{E}}_{t-1}(\theta_{t-1}^*)$ is zero, we find that:
\begin{equation*}
  \begin{aligned}
    v_{t}-v_{t-1}&=\sum_{o=1}^{t-1}(\theta_{t-1}^*-\theta_{o}^*)^\intercal\mathbf{H}_{o}\left(\theta_{o}^*\right)-\sum_{o=1}^{t-2}(\theta_{t-2}^*-\theta_{o}^*)^\intercal\mathbf{H}_{o}\left(\theta_{o}^*\right)\\
    &=\sum_{o=1}^{t-2}(\theta_{t-1}^*-\theta_{o}^*)^\intercal\mathbf{H}_{o}\left(\theta_{o}^*\right)-\sum_{o=1}^{t-2}(\theta_{t-2}^*-\theta_{o}^*)^\intercal\mathbf{H}_{o}\left(\theta_{o}^*\right)\\
    &=\sum_{o=1}^{t-2}\left(\theta_{t-1}^*-\theta_{o}^*-\theta_{t-2}^*+\theta_{o}^*
    \right)^\intercal\mathbf{H}_{o}\left(\theta_{o}^*\right)
  \end{aligned}
\end{equation*}

Since in $\bar{\mathcal{E}}_{t-1}(\theta_{t-1}^*)$,
$\frac{1}{2}\Delta_{t-1}^\intercal\sum_{o=1}^{t-2}H_{o}\left(\boldsymbol{\theta}_{o}^*\right)\Delta_{t-1}=0$,
So we find that: $v_{t}-v_{t-1}=0$.

Also we can get the conclusion that:
\begin{equation}
  \label{eq:vt}
  v_t-v_{t-1}=\Delta_{t-1}^\intercal\sum_{\tau=1}^{t-2}\mathbf{H}_\tau(\theta_\tau^*)
\end{equation}

We can say that if $\bar{\mathcal{E}}_{\tau}(\theta_{\tau}^*)=0$,
$\forall\tau<t$, then
$\bar{\mathcal{E}}_t(\theta_t^*)=\frac{1}{2(t-1)}\Delta_{t}^{\intercal}\left(\sum_{i=1}^{t-1}\mathbf{H}_{i}(\theta_i^*)\right)\Delta_{t}$,
which is show in Theorem~\ref{theorem:null-forgetting} (1).

In Theorem~\ref{theorem:null-forgetting} (2), we need to proof the
statement:
$\mathcal{E}_\tau(\theta_t^*)=0,\forall\tau<t\Longleftrightarrow\Delta_t^\intercal\left(\sum_{\tau^{\prime}=1}^{t-1}\mathbf{H}_{\tau^{\prime}}(\theta_{\tau^{\prime}}^*)\right)\Delta_t=0\,.$
It is clear that when $\mathcal{E}_\tau(\theta_t^*)=0,\forall\tau<t$,
then $\bar{\mathcal{E}}_\tau(\theta_\tau^*)=0, \forall\tau<t$, using
conclusion in Theorem~\ref{theorem:null-forgetting}(1), we can get
the result:
$\Delta_t^\intercal\left(\sum_{\tau^{\prime}=1}^{t-1}\mathbf{H}_{\tau^{\prime}}(\theta_{\tau^{\prime}}^*)\right)\Delta_t=0$.

When
$\Delta_t^\intercal\left(\sum_{\tau^{\prime}=1}^{t-1}\mathbf{H}_{\tau^{\prime}}(\theta_{\tau^{\prime}}^*)\right)\Delta_t=0$,
using Assumption~\ref{assumption:convergence}, all Hessian matrix
$\mathbf{H}_\tau(\theta_\tau^*)$ is positive semi-definite, then we
have:
$\Delta_\tau^\intercal(\sum_{o=1}^{\tau-1}\mathbf{H}_o(\theta_o^*))=0,
\forall\tau<t$, using Equation~(\ref{eq:vt}), we have: $v_2\leq
v_3\cdots \leq v_t\leq v_{t+1}$. In definition,
$\bar{\mathcal{E}}_2(\theta_2^*)$ is zero, and
$\bar{\mathcal{E}}_3(\theta_3^*)=\frac{1}{8}\Delta_3^\intercal(\sum_{o=1}^2\mathbf{H}_o{\theta_o^*})\Delta_3=0$,
we have $\bar{\mathcal{E}}_t(\theta_t^*)=0$ which can say that
$\mathcal{E}_\tau(\theta_t^*)=0,\forall\tau<t$

\subsection{Other Zero Forgetting strategy}
\label{appendix:other_zero_forgetting_methods_placeholder}

\subsubsection{Orthogonal Gradient Method}

We follow the setting of the paper~\cite{farajtabar2020orthogonal},
where $\mathcal{L}$ is a non-negative loss function(CE or BCE loss).
We also assume that the symbol $f_\theta$ represents the parameter
$\theta$ used by the model ($f_\theta^c(x)$ means one of the output's
channel which is about class $c$) and $N_t$ represents the total
amount of data used in the tasks from $1$ to $t$.

In Orthogonal Gradient Method~\cite{farajtabar2020orthogonal}, they
want to address catastrophic forgetting in continual learning by
keeping the updates for a new task orthogonal to the gradient
directions associated with previous tasks' predictions. Formally,
equal to Equation~(\ref{eq:ogm}).
\begin{equation}
  \label{eq:ogm}
  \langle\Delta_t^i,\nabla_{\theta_{t-1}^*}f_{\theta_{t-1}^*}^c(x_{\tau})\rangle=0\quad\forall
  c\in\mathcal{C}_{[1:t-1]},{x}_{\tau} \in D_\tau,\tau<t\,,
\end{equation}
where $\Delta_t^i$ denotes the the $i$-th step for update.

Due to the previous studies~\cite{hessian}, the hessian matrix of the
loss can be decomposed as two other matrices: the outer-product
Hessian and the functional Hessian, and at the optimum parameter for
loss function, the functional Hessian is
negligible~\cite{hessian2021}. So we can write the approximation of
the Hessian matrix under the optimal parameters in task $\tau$:
\begin{equation*}
  \mathbf{H}_\tau(\theta_\tau^*)=\frac{1}{N_\tau}\sum_{i=1}^{N_\tau}\nabla_{\theta_\tau^*}f_{\theta_\tau^*}(x_i)(\nabla_f^2\mathcal{L}_\tau(x_i,y_i))\nabla_{\theta_\tau^*}f(x_i)^\intercal
\end{equation*}

When the parameter update follows Equation~(\ref{eq:ogm}), it
essentially satisfies the condition that the binomial is 0 mentioned
in our zero forgetting condition in Theorem~\ref{theorem:null-forgetting}.



\subsection{Why Parameters Isolation is Zero Forgetting}
\label{appendix:parameters_isolation}
We find that when the SPI strategy is used, the sum of the historical
Hessian
matrix($\sum_{\tau=1}^{t-1}\mathbf{H}_{\tau}(\theta_{\tau}^*)$) is
similar to a semi-positive definite block diagonal matrix to its
parameter subspace:

\begin{equation}
  \label{eq:sum_hessian}
  \sum_{\tau=1}^{t-1}\mathbf{H}_{\tau}(\theta_{\tau}^*) =
  \begin{pmatrix}
    \mathbf{H}_{1}(\theta_{1}^*) & \mathbf{0} & \cdots & \mathbf{0}
    &\cdots&\mathbf{0}\\
    \mathbf{0} & \mathbf{H}_{2}(\theta_{2}^*) & \cdots & \mathbf{0}
    &\cdots&\mathbf{0}\\
    \vdots & \vdots & \ddots & \vdots &\cdots&\mathbf{0}\\
    \mathbf{0} & \mathbf{0} & \cdots &
    \mathbf{H}_{t-1}(\theta_{t-1}^*)&\cdots&\mathbf{0}\\
    \vdots&\vdots&\vdots&\vdots&\vdots&\vdots\\
    \mathbf{0} & \mathbf{0} & \mathbf{0} & \mathbf{0}& \mathbf{0}& \mathbf{0}
  \end{pmatrix}\,.
\end{equation}
Where:
\begin{itemize}
  \item According to Assumption~\ref{assumption:convergence}, Each
    $\mathbf{H}_\tau(\theta_\tau^*)$ on the diagonal is a square
    matrix representing the Hessian for task $\tau$ and is a
    semi-positive matrix. The dimensions of
    $\mathbf{H}_\tau(\theta_\tau^*)$ correspond to the number of new
    parameters introduced for task $\tau$.
  \item The $\mathcal{0}$ symbols represent zero, indicating that the
    Hessian components for parameters of different tasks are
    decoupled due to SPI strategy.
\end{itemize}

Just as expressed in Equation~(\ref{eq:sum_hessian}), the sum of the
historical tasks' Hessian matrices
$\sum_{\tau=1}^{t-1}\mathbf{H}_{\tau}(\theta_{\tau}^*)$ and the
parameter update for new task $\Delta_t$ are not in the same
subspace, and the product between them must be 0 which is the zero
forgetting condition in Theorem~\ref{theorem:null-forgetting}.

\subsection{Cascade Modeling}
\label{appendix:modeling}

\paragraph{Variable Definitions.}
We first clarify the random variables used throughout this derivation:
\begin{itemize}
  \item $Y_p \in \mathcal{C}_{[1:t]} \cup \{0\}$: the random variable
    representing the class label assigned to pixel $p$, where $0$
    denotes background.
  \item $Z_c \in \{0, 1\}$: a binary random variable indicating
    whether class $c$ is present in image $\mathbf{x}$. Formally,
    $Z_c = 1 \Leftrightarrow \exists\, p : Y_p = c$.
\end{itemize}

We start from the task-level factorization introduced in
Equation~(\ref{eq:prob_decomp}):
$$P(Y_p = c \mid \mathbf{x}) = P\bigl(Y_p = c \mid \mathbf{x},
\mathcal{T}_t\bigr)\, P\bigl(\mathcal{T}_t \mid \mathbf{x}\bigr),
\quad c \in \mathcal{C}_t,$$
where $\mathcal{T}_t$ denotes the task that first introduced class
$c$. Under SPI, classes are disjoint across tasks:
$\mathcal{C}_\tau \cap \mathcal{C}_t = \varnothing$ for all $\tau \neq t$.

\begin{lemma}[Task Prior Decomposition]\label{lem:task_prior_decomp}
  For any image $\mathbf{x}$,
  \begin{equation}
    \label{eq:task_to_class_prior}
    P(\mathcal{T}_t \mid \mathbf{x}) = P\left(\bigvee_{c \in
      \mathcal{C}_t} Z_c = 1 \;\bigg|\; \mathbf{x}\right) \approx
    \sum_{c \in \mathcal{C}_t} P(Z_c = 1 \mid \mathbf{x}),
  \end{equation}
  where $P(Z_c = 1 \mid \mathbf{x})$ is the probability that class
  $c$ exists somewhere in image $\mathbf{x}$, and the approximation
  holds when class co-occurrence within a single task is rare.
\end{lemma}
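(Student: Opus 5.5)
The plan is to prove the equality and the approximation as two separate steps. For the equality, I will first fix what the event $\mathcal{T}_t$ means at the image level. In the factorization of Equation~(\ref{eq:prob_decomp}), $P(\mathcal{T}_t\mid\mathbf{x})$ is the probability that image $\mathbf{x}$ is affiliated with task $\mathcal{T}_t$. I will take this to mean that at least one pixel of $\mathbf{x}$ carries a label from $\mathcal{C}_t$. Formally, the event is $\{\exists\,p: Y_p\in\mathcal{C}_t\}$.

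Since the $\mathcal{C}_t$ are disjoint across tasks, this event splits by class:
\[
\{\exists\,p: Y_p\in\mathcal{C}_t\}=\bigcup_{c\in\mathcal{C}_t}\{\exists\,p: Y_p=c\}.
\]
By the definition of $Z_c$, the right-hand side is the event $\bigvee_{c\in\mathcal{C}_t}(Z_c=1)$. Conditioning on $\mathbf{x}$ then gives the equality in Equation~(\ref{eq:task_to_class_prior}). This step is definitional; the only care needed is to state explicitly that task affiliation is interpreted as the existence of a task-$t$ class in the image.

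For the approximation, I will apply inclusion--exclusion to the union of the events $A_c=\{Z_c=1\}$ for $c\in\mathcal{C}_t$. This gives
\[
\sum_{c} P(A_c\mid\mathbf{x})-\sum_{c<c'}P(A_c\wedge A_{c'}\mid\mathbf{x})\le P\Bigl(\bigvee_c A_c\,\Big|\,\mathbf{x}\Bigr)\le \sum_c P(A_c\mid\mathbf{x}).
\]
These are the Bonferroni inequalities. The upper bound is the union bound, and the lower bound is the second Bonferroni inequality.

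It follows that the error of replacing the disjunction by the sum is nonnegative and at most
\[
\sum_{c<c'\in\mathcal{C}_t}P(Z_c=1,Z_{c'}=1\mid\mathbf{x}).
\]
This is exactly the within-task co-occurrence mass. The approximation therefore holds with an explicit error bound that vanishes when co-occurrence of classes within a single task is rare, as the statement requires. In the limiting case where the $A_c$ are mutually exclusive given $\mathbf{x}$, equality holds. This covers in particular every task that introduces a single class ($|\mathcal{C}_t|=1$, as in the 1-1 and 10-1 settings).

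The main obstacle is the first step, not the inequality. The paper never formally defines $\mathcal{T}_t$ as an event over pixel labels, so the equality depends on adopting the existence interpretation. I will state this as the defining convention for image-level task affiliation, consistent with how Equation~(\ref{eq:prob_decomp}) is borrowed from the image-level setting of \cite{kim2022theoretical}.

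I will also note one caveat. If $P(\mathcal{T}_t\mid\mathbf{x})$ is meant to be normalized over tasks, as in a single-task-per-image model, then the identity holds only up to the proportionality already present in Equation~(\ref{eq:fine_prob_decomp}). That is why the cascade decomposition is written with $\propto$ rather than $=$.
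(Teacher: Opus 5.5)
Your proposal is correct and follows the paper's proof. The paper likewise takes $\mathcal{T}_t$ to be active iff some class of $\mathcal{C}_t$ appears in $\mathbf{x}$, and then invokes inclusion--exclusion, dropping the overlap terms when within-task co-occurrence is negligible; your Bonferroni sandwich makes that step quantitative, showing the sum overestimates the disjunction by at most $\sum_{c<c'}P(Z_c=1,Z_{c'}=1\mid\mathbf{x})$, which sharpens the paper's informal ``$\approx$''.
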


\begin{proof}
  Since classes in $\mathcal{C}_t$ are mutually exclusive and task
  $\mathcal{T}_t$ is active if and only if at least one class from
  $\mathcal{C}_t$ appears in $\mathbf{x}$, we have
  $$P(\mathcal{T}_t \mid \mathbf{x}) = P\left(\bigvee_{c \in
    \mathcal{C}_t} Z_c = 1 \;\bigg|\; \mathbf{x}\right) \approx
  \sum_{c \in \mathcal{C}_t} P(Z_c = 1 \mid \mathbf{x})$$
  by the inclusion-exclusion principle (approximated when inter-class
  overlap is negligible).
\end{proof}

\paragraph{From Task-Level to Class-Level Factorization.}
The task-level factorization necessarily reduces to a simpler
class-level form that eliminates the problematic task prior
$P(\mathcal{T}_t \mid \mathbf{x})$.

\begin{theorem}[Cascade Factorization]
  \label{thm:cascade}
  For any pixel $p$, class $c$, and image $\mathbf{x}$:
  \begin{equation}
    \label{eq:cascade}
    P(Y_p = c \mid \mathbf{x}) = P\bigl(Y_p = c \mid \mathbf{x}, Z_c =
    1\bigr) \cdot P(Z_c = 1 \mid \mathbf{x}).
  \end{equation}
\end{theorem}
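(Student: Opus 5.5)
The plan is to prove the identity directly from the definition of $Z_c$ by conditioning, using only the law of total probability and the logical implication $\{Y_p = c\} \subseteq \{Z_c = 1\}$. Lemma~\ref{lem:task_prior_decomp} is not needed for the equality itself; it only explains why the task prior can be replaced by class-level existence probabilities. The statement is exact, with no approximation, because conditioning on existence loses nothing: a pixel can only carry label $c$ if class $c$ occurs somewhere in the image.

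\textbf{Step 1 (event inclusion).} Recall the definition $Z_c = 1 \Leftrightarrow \exists\, p' : Y_{p'} = c$. Taking $p' = p$ shows that the event $\{Y_p = c\}$ is contained in $\{Z_c = 1\}$. Hence $\{Y_p = c\} \cap \{Z_c = 0\} = \varnothing$, and therefore $P(Y_p = c \mid \mathbf{x}, Z_c = 0) = 0$ whenever this conditional is defined.

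\textbf{Step 2 (total probability).} Expand over the binary variable $Z_c$:
\begin{equation*}
P(Y_p = c \mid \mathbf{x}) = \sum_{z \in \{0,1\}} P(Y_p = c \mid \mathbf{x}, Z_c = z)\, P(Z_c = z \mid \mathbf{x}).
\end{equation*}
By Step~1 the $z=0$ term vanishes, and what remains is exactly Equation~(\ref{eq:cascade}). An equivalent route is to write $P(Y_p = c \mid \mathbf{x}) = P(Y_p = c, Z_c = 1 \mid \mathbf{x})$ and apply the chain rule.

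\textbf{Step 3 (degenerate cases).} The only delicate point is well-definedness. If $P(Z_c = 1 \mid \mathbf{x}) = 0$, then $P(Y_p = c \mid \mathbf{x}) = 0$ by Step~1, so both sides vanish under the usual convention that the product with a zero-probability event is zero, whatever value the undefined conditional takes. Similarly, if $P(Z_c = 0 \mid \mathbf{x}) = 0$, the $z=0$ term is simply absent.

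Finally, I would note how this connects back to Equation~(\ref{eq:prob_decomp}). The task prior $P(\mathcal{T}_t \mid \mathbf{x})$, which by Lemma~\ref{lem:task_prior_decomp} is a disjunction over the $Z_c$, is replaced by the single relevant indicator $Z_c$. This justifies the proportional form used for Phase I and Phase II in Equation~(\ref{eq:fine_prob_decomp}).
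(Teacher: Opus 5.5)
Your proposal is correct and follows the paper's proof essentially verbatim: expand $P(Y_p = c \mid \mathbf{x})$ by total probability over $Z_c$, then drop the $Z_c = 0$ term because $\{Y_p = c\} \subseteq \{Z_c = 1\}$. Your explicit event-inclusion step and your handling of the degenerate case $P(Z_c = 1 \mid \mathbf{x}) = 0$ are small additions in rigor that the paper leaves implicit.
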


\begin{proof}
  We derive this using the law of total probability over the class
  existence variable $Z_c$:
  \begin{align*}
    P(Y_p = c \mid \mathbf{x}) &= P(Y_p = c \mid \mathbf{x}, Z_c = 1)
    \cdot P(Z_c = 1 \mid \mathbf{x}) \\
    &\quad + P(Y_p = c \mid \mathbf{x}, Z_c = 0) \cdot P(Z_c = 0 \mid \mathbf{x}).
  \end{align*}
  The key insight: if class $c$ is not present in the image ($Z_c =
  0$), then no pixel can be labeled as $c$. Therefore, $P(Y_p = c
  \mid \mathbf{x}, Z_c = 0) = 0$. This eliminates the second term,
  yielding:
  $$P(Y_p = c \mid \mathbf{x}) = P(Y_p = c \mid \mathbf{x}, Z_c = 1)
  \cdot P(Z_c = 1 \mid \mathbf{x}).$$
\end{proof}

\paragraph{Normalization over All Classes.}
The final pixel-level prediction requires normalization over all
candidate classes. Let $\mathcal{C}_{\text{pred}} = \{c : P(Z_c = 1
\mid \mathbf{x}) \geq \alpha\}$ be the set of predicted classes after
thresholding. The normalized probability is:
\begin{equation}
  \label{eq:cascade_normalized}
  P(Y_p = c \mid \mathbf{x}) = \frac{P(Y_p = c \mid \mathbf{x}, Z_c =
    1) \cdot P(Z_c = 1 \mid \mathbf{x})}{\sum_{c' \in
      \mathcal{C}_{\text{pred}} \cup \{0\}} P(Y_p = c' \mid
    \mathbf{x}, Z_{c'} = 1) \cdot P(Z_{c'} = 1 \mid \mathbf{x})},
\end{equation}
where $c' = 0$ corresponds to the background class with $P(Z_0 = 1
\mid \mathbf{x}) = 1$ (background is always present).

\paragraph{Implications for CogCaS Architecture.}
Theorem~\ref{thm:cascade} proves that under SPI, the cascade
factorization in Equation~(\ref{eq:cascade}) is the unique
probabilistically consistent decomposition. This directly motivates
our two-phase CogCaS design:
\begin{itemize}
  \item \textbf{Phase I}: Multi-label classifier estimates class
    existence probabilities $P(Z_c = 1 \mid \mathbf{x})$
  \item \textbf{Phase II}: Binary segmentation heads model
    conditional segmentation $P(Y_p = c \mid \mathbf{x}, Z_c = 1)$
\end{itemize}

This decomposition eliminates the ill-defined task prior, enables
complete parameter isolation for zero-forgetting, and aligns with
optimal Bayesian factorization principles.

\section{Implementation Details}
\label{appendix:implementation}

\subsection{Training and Inference}
\label{appendix:training}

\paragraph{CISS Training.}
For every new task $T_t$, we append one multi-label classifier head
and one class-specific binary segmentation head (Deeplab-v3's ASPP
module + $1 \times 1$ conv) for each unseen class $c \in
\mathcal{C}_t$, while all previous weights are frozen (*Strict
Parameter Isolation*).

Furthermore, during the training stage, we manually construct
near-out-of-distribution (near-OOD) data based on the data available
for the current task to enhance the model's robustness. As
illustrated in the Figure~\ref{fig:nearood}, this process is divided
into two distinct stages: Phase I and Phase II. It is important to
note that our model is trained separately in these two phases.
\begin{figure}[ht]
  \centering
  \includegraphics[width=0.5\linewidth]{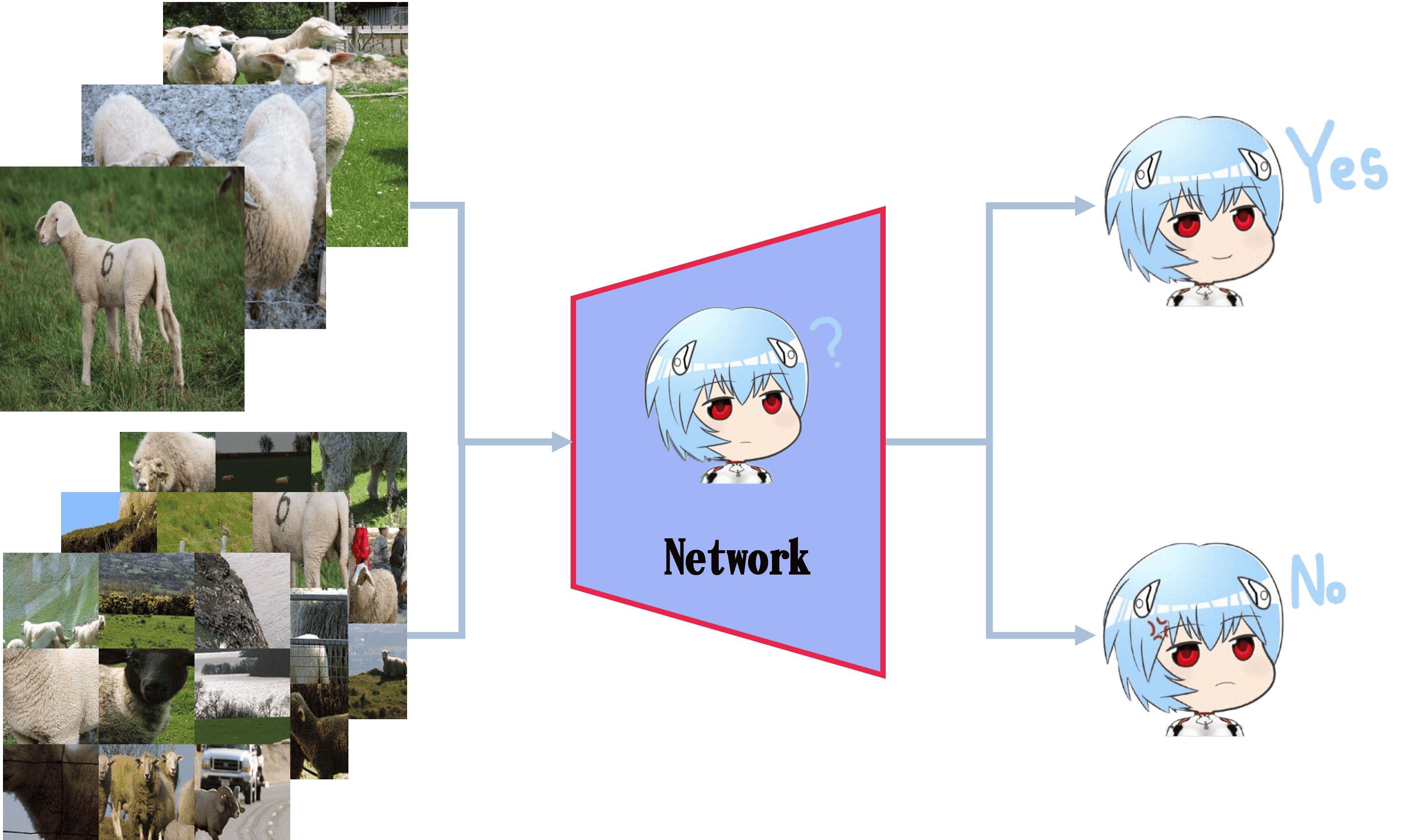}
  \caption{The figure illustrate near-ood samples and process.}
  \label{fig:nearood}

\end{figure}
\vspace{-2pt}
\paragraph{Joint Training.}
In the joint training setting, we first train the classifier in Phase
I. After the training for Phase I is complete, because we have access
to the full dataset, we then select corresponding out-of-distribution
data for each class-specific segmentation head in Phase II. This data
is chosen from the dataset at a 1:1 ratio, and we ensure that for a
specific class, its corresponding OOD samples do not contain any
information about that class in any single pixel.

\paragraph{Parameter Overhead.}  Each additional class contributes
\mbox{$\approx\!1.37$\,M} parameters (\mbox{$5.24$ MB} at FP32),
corresponding to
\mbox{3.1 \%} of a ResNet-101 backbone (44.5 M params) and
\mbox{4.9 \%} of a Swin-T backbone (28 M params); even for the larger
Swin-B (88 M params) the overhead per class is merely \mbox{1.6 \%}.

\paragraph{Loss Functions.}  Classification and segmentation are optimised
separately and then summed:
\vspace{-1.5pt}
\begin{equation*}
  \begin{alignedat}{2}
    \mathcal{L}_{\text{cls}}
    &= \tfrac{1}{|\mathcal{B}|}
    \sum_{x\in\mathcal{B}}
    \sum_{c\in\mathcal{C}_{1:t}}
    \operatorname{BCE}\!\bigl(P(c\mid x),\,y_c\bigr) \\[0.6ex]
    \mathcal{L}_{\text{seg}}
    &= \tfrac{1}{|\mathcal{B}|}
    \sum_{x\in\mathcal{B}}
    \sum_{c\in\mathcal{C}_t}
    \Bigl[
      \alpha\,\operatorname{Focal}(M_c,\hat M_c)
      + \beta\,\operatorname{Dice}(M_c,\hat M_c)
    \Bigr] \\[0.9ex]
    \operatorname{Focal}(M_c, \hat{M}_c)
    &= -\tfrac{1}{N} \sum_{i=1}^{N} \Big[
      \alpha\, M_{c,i} (1 - \hat{M}_{c,i})^\gamma \log(\hat{M}_{c,i})
      + (1-\alpha)(1 - M_{c,i}) (\hat{M}_{c,i})^\gamma \log(1 - \hat{M}_{c,i})
    \Big] \\[0.9ex]
    \operatorname{Dice}(M,\hat M)
    &= \frac{2\sum_i M_i\hat M_i+\varepsilon}{\sum_i M_i+\sum_i\hat
    M_i+\varepsilon},
    \quad \varepsilon=10^{-6}
  \end{alignedat}
\end{equation*}




\noindent\textbf{Notation.}
$\mathcal{B}$: mini-batch;
$\mathcal{C}_{1:t}$: all classes learned up to task $t$;
$\mathcal{C}_{t}$: classes introduced at task $t$;
$P(c\mid x)$: predicted presence probability for class $c$;
$y_c\!\in\!\{0,1\}$: image-level label;
$M_c,\hat M_c$: predicted / ground-truth masks;
$i$: spatial index.

The full objective is
$\mathcal{L}=\mathcal{L}_{\text{cls}}+\lambda\,\mathcal{L}_{\text{seg}}$
with $\lambda{=}1$.

\textbf{Optimisation Schedule.}
Epoch 1 trains only the new classifier heads; the remaining epochs finetune
both classifier and segmentation heads.
SGD (momentum 0.9, weight-decay $10^{-4}$), batch size 20,
initial LR $5\times10^{-3}$ with cosine decay is used.

\paragraph{Inference.}%
\begin{algorithm}[H]
  \caption{Inference Pipeline (per image)}
  \begin{algorithmic}[1]
    \REQUIRE backbone $\Phi$, multi-label head $G$, binary heads $\{H_c\}$
    \STATE $F \leftarrow \Phi(x)$ \hfill \COMMENT{shared feature map}
    \STATE $\mathbf{p} \leftarrow G(F)$ \hfill
    \COMMENT{class-presence probabilities}
    \STATE $\mathcal{C}_{\text{pred}} \leftarrow \{\,c \mid
    \mathbf{p}[c] > 0.5\,\}$ \hfill \COMMENT{or top-$k$}
    \FORALL{$c \in \mathcal{C}_{\text{pred}}$}
    \STATE $M_c \leftarrow H_c(F)$ \hfill \COMMENT{binary mask for class $c$}
    \ENDFOR
    \STATE $M_{\text{final}} \leftarrow
    \mathcal{U}\!\bigl(\{M_c\}_{c\in\mathcal{C}_{\text{pred}}}\bigr)$
    \hfill \COMMENT{mask fusion}
    \STATE \textbf{return} $M_{\text{final}}$
  \end{algorithmic}
\end{algorithm}
Because only $|\mathcal{C}_{\text{pred}}|$ segmentation heads are activated,
inference cost scales with the number of present classes rather than the total
number of learned classes.

\section{Additional Experiments}
\label{appendix:experiments}

\subsection{Comparison with Large-Model-Based Methods}
\label{appendix:large_model}

We compare CogCaS with recent large-model-based CISS methods in
Table~\ref{tab:large_model}. SAM-based methods (e.g., DecoupleCSS) leverage a
foundation model with $\sim$632M parameters pre-trained on 11M images,
representing a fundamentally different resource regime compared to our standard
backbones (ResNet-101: 44.5M; Swin-L: 197M). Our ``Cognitive Cascade''
architecture and theoretical zero-forgetting guarantee are orthogonal to
backbone choice and can be readily integrated with SAM or similar models.
CogCaS offers a favorable balance between performance and efficiency,
particularly for resource-constrained deployment scenarios.

\begin{table}[t]
    \centering
    \caption{Comparison with large-model-based CISS methods. $\star$: SAM-based ($\sim$632M params). $\ddagger$: Mask2Former-based.}
    \label{tab:large_model}
    \small
    \begin{tabular}{llccc}
    \toprule
    \textbf{Method} & \textbf{Setting} & \textbf{Old} & \textbf{New} & \textbf{All} \\
    \midrule
    \multicolumn{5}{c}{\textit{PASCAL VOC 2012}} \\
    \midrule
    DecoupleCSS$\star$ & 19-1 & 82.9 & 83.7 & 84.0 \\
    DecoupleCSS$\star$ & 15-1 & 83.8 & 82.1 & 83.4 \\
    \rowcolor{blue!20} Ours$\circ$ & 15-1 & 75.5 & 71.4 & 74.4 \\
    \midrule
    \multicolumn{5}{c}{\textit{ADE20K}} \\
    \midrule
    DecoupleCSS$\star$ & 100-10 & 58.2 & 52.0 & 56.9 \\
    DecoupleCSS$\star$ & 100-5 & 57.5 & 55.6 & 56.9 \\
    CIT-M2Former$\ddagger$ & 100-10 & 56.9 & 38.2 & 50.7 \\
    CIT-M2Former$\ddagger$ & 100-5 & 55.5 & 32.2 & 47.7 \\
    SimCIS$\ddagger$ & 100-10 & 49.7 & 27.4 & 42.3 \\
    SimCIS$\ddagger$ & 100-5 & 46.7 & 22.8 & 38.7 \\
    \rowcolor{blue!20} Ours$\diamond$ & 100-10 & 49.2 & 44.0 & 47.4 \\
    \rowcolor{blue!20} Ours$\diamond$ & 100-5 & 49.5 & 43.6 & 47.3 \\
    \bottomrule
    \end{tabular}
\end{table}

\subsection{Effect of Task-Shared Parameters}
\label{appendix:additional_study}

\begin{wrapfigure}{r}{0.5\textwidth}
  \centering
  \vspace{-12pt}
  \includegraphics[width=0.48\textwidth]{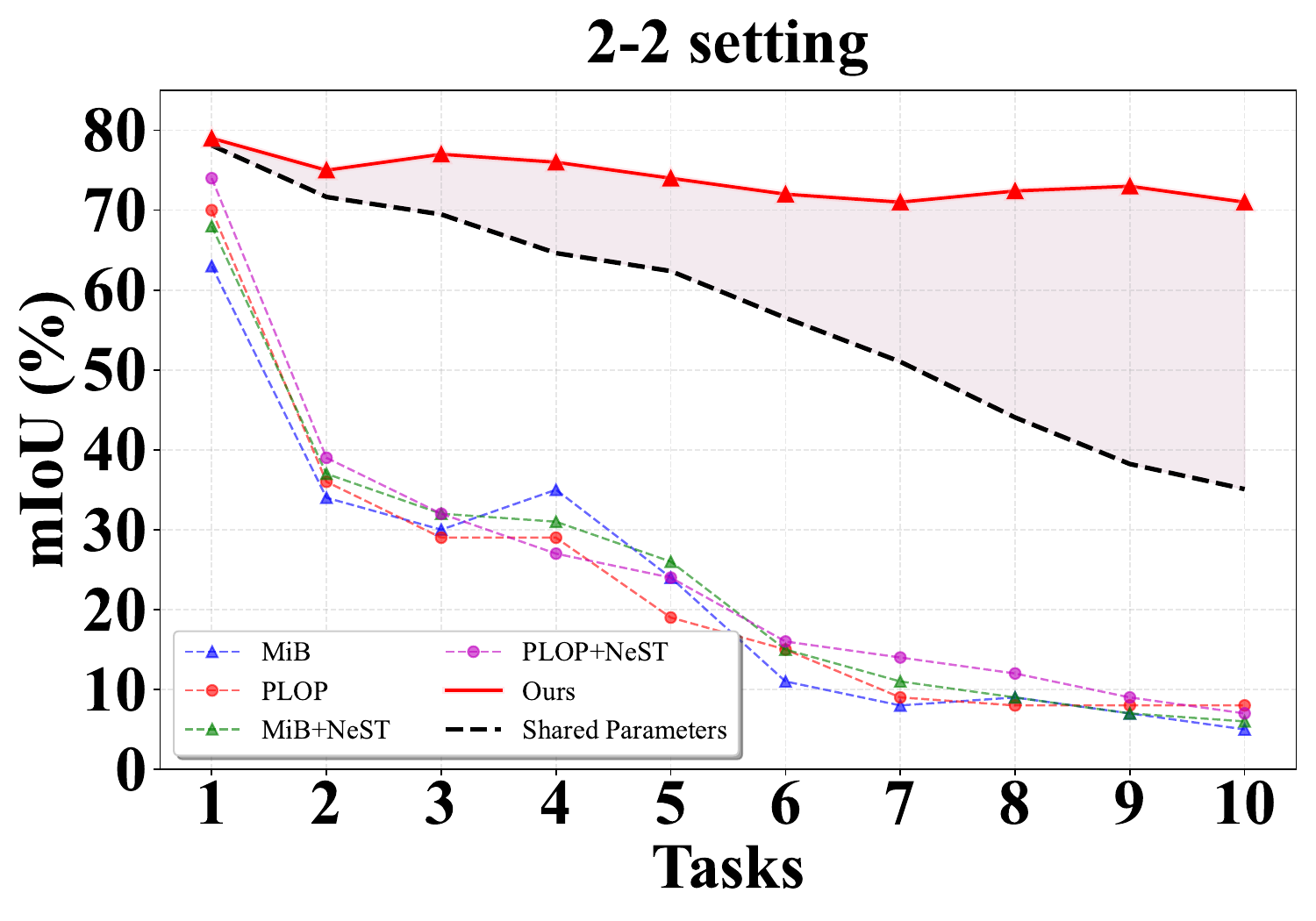}
  \caption{Impact on model performance when the task-shared parameters
  are set, evaluated on the VOC 2-2 setting.}
  \label{fig:addtional}
  \vspace{-10pt}
\end{wrapfigure}
Due to limited resources and time, our further investigations focused
on the challenging 2-2 settings on the VOC dataset. To further
validate our method, we unfroze the Encoder's final bottleneck layer,
making it task-shared and trainable, as shown in
Figure~\ref{fig:addtional}. Manually setting these task-shared
parameters did not affect new class learning (matching baseline
performance). However, this configuration, when combined with SPI,
resulted in catastrophic forgetting, evidenced by a 37\% drop in
mIoU. By intentionally disrupting the SPI settings, we observed that
while new classes learned normally, old classes experienced
catastrophic forgetting. This observation, in reverse, further
substantiates the correctness of our proposed method. What's more,
adding class-specific LoRA to the backbone, the improvement obtained
by our method is not significant (only improve 1\% mIoU), but the
parameters need to be changed with the task, which will significantly
increase the inference time.

\end{document}